\documentclass[journal]{IEEEtran}

\usepackage[nocompress]{cite}  %%%%%%%

\usepackage{amsmath,amssymb,amsfonts}
\usepackage{algorithmic}
\usepackage{graphicx}
\usepackage{textcomp}
\usepackage{booktabs}  %
\usepackage{array}     %
\usepackage{algorithm}
\usepackage{amsthm}
\theoremstyle{plain}
\newtheorem{theorem}{Theorem}[section]
\newtheorem{proposition}[theorem]{Proposition}
\newtheorem{lemma}[theorem]{Lemma}
\newtheorem{corollary}[theorem]{Corollary}
\theoremstyle{definition}

\theoremstyle{remark}
\newtheorem{remark}[theorem]{Remark}
\usepackage{hyperref}
\usepackage[capitalize,noabbrev]{cleveref}
\usepackage{subcaption}
\usepackage{float}

\begin{document}

\title{ Exploring the Impact of Parameter Update Magnitude on Forgetting and
Generalization of Continual Learning }

%%%%
\author{

JinLi He, Liang Bai, Xian Yang

\IEEEcompsocitemizethanks{ 
\IEEEcompsocthanksitem J. He and L. Bai are with Institute of Intelligent Information Processing, Shanxi University, Taiyuan, 030006, China (Corresponding author: Liang Bai)\protect\\
Email: hejinli@sxu.edu.cn, bailiang@sxu.edu.cn
\IEEEcompsocthanksitem X. Yang is with Alliance Manchester Business School, The University of Manchester, Manchester, M13 9PL, UK. Email: xian.yang@manchester.ac.uk   }

}

\maketitle

%%%%%%%%%%%%%%%%%%%%%%%%%%%%%%%%%%%%%%%%%%%%%%%%%%%%%%%%%
\begin{abstract}

    The magnitude of parameter updates are considered a key factor in continual learning. However, most existing studies focus on designing diverse update strategies, while a theoretical understanding of the underlying mechanisms remains limited. Therefore, we characterize model's forgetting from the perspective of parameter update magnitude and formalize it as knowledge degradation induced by task-specific drift in the parameter space, which has not been fully captured in previous studies due to their assumption of a unified parameter space. By deriving the optimal parameter update magnitude that minimizes forgetting, we unify two representative update paradigms, frozen training and initialized training, within an optimization framework for constrained parameter updates. Our theoretical results further reveals that sequence tasks with small parameter distances exhibit better generalization and less forgetting under frozen training rather than initialized training. These theoretical insights inspire a novel hybrid parameter update strategy that adaptively adjusts update magnitude based on gradient directions. Experiments on deep neural networks demonstrate that this hybrid approach outperforms standard training strategies, providing new theoretical perspectives and practical inspiration for designing efficient and scalable continual learning algorithms.

\end{abstract}

%%%%%%%%%%%%%%%%%%%%%%%%%%%%%%%%%%%%%%%%%%%%%%%%%%%%%%%%%
\begin{IEEEkeywords}
Continual Learning, Learning Theory, Incremental Learning.
\end{IEEEkeywords}

%%%%%%%%%%%%%%%%%%%%%%%%%%%%%%%%%%%%%%%%%%%%%%%%%%%%%%%%%
\section{Introduction}

    \IEEEPARstart{T}{raditional} machine learning typically assume closed-environment settings in which critical factors in the learning process remain constant \cite{Machine1_2015,Machine2_2021,Machine3_2023}. However, data often arrive continuously in real-world applications, and collecting and storing all observed data becomes impractical due to escalating storage requirements, increasing computational costs, and potential privacy concerns \cite{OpenWorld1_2022,OpenWorld2_2023,OpenWorld3_2024}. A more practical alternative is to incrementally update the model in real time as new data become available \cite{incremental1_2020,incremental2_2022,incremental3_2023,incremental4_2023}.

    Unlike traditional approaches that capture static data distributions, continual learning \cite{continual1_2016,continual2_2024,continual3_2024_suvey} aims to learn from data that arrive sequentially with the goal of maintaining reliable performance across all tasks. This paradigm aims to mimic human learning by incrementally acquiring new skills throughout their lifecycle \cite{human1_2021,human2_2022,human3_2023}. However, a key challenge is catastrophic forgetting \cite{Catastrophic1_1989,Catastrophic2_1995,Catastrophic3_2017}, where models suffer a significant degradation in performance on previously learned data after adapting to new ones.

    In recent years, numerous methods have been proposed to address catastrophic forgetting \cite{survey1_2018,survey3_2020,survey4_2023}. One approach \cite{Regularization1_2016,Regularization5_2018,Regularization9_2024} constrains parameters to prevent deviation from the solution space of previous tasks. Another approach \cite{Architecture1_2016,Architecture2_2018,Architecture5_2023} dynamically allocates specific network parameters or modules to each incoming task. Additionally, incorporating previous data \cite{replay1_2016,replay4_2018,replay11_2024} by storing and replaying portions of previous datasets is a simple yet effective strategy. Although numerous methods have contributed to mitigate catastrophic forgetting, some studies suggest that existing algorithms still suffer from computational, memory, and storage inefficiencies~\cite{HowEfficient1_2023,Efficiency1_2023,Efficiency2_2024}. Some approaches even require more parameter updates than retraining from scratch~\cite{HowEfficient1_2023}. Therefore, it is essential to explore effective optimization paths on the parameter update magnitudes for practical applications.

    In this paper, we address fundamental questions driving this research domain: Is catastrophic forgetting solely caused by changes in a subset of parameters? When does freezing most parameters outperform training all parameters? While stability in prior tasks can be maintained through cross-task parameter sharing and additional task-specific parameters enhance plasticity for learning new tasks, does this also exacerbate forgetting? How much parameter updating is sufficient to achieve strong performance on new tasks?

    To address the above questions, a mathematical analysis is required to examine parameter-space separation and forgetting dynamics. However, characterizing parameter separation in sequential tasks is inherently more complex than in static scenarios, and variations in training strategies and optimization objectives further compound this difficulty. The main contributions of this study are summarized as follows:

    \begin{itemize}
      \item We characterize forgetting under frozen training and initialized training and derive the optimal update magnitude for minimizing forgetting. The results demonstrate that frozen training outperform initialized training when sequential tasks are close in parameter space.
      \item We propose an adaptive hybrid update framework that dynamically adjusts update magnitude according to the gradient direction in parameter space, enabling a finer and flexible trade-off between stability and plasticity.
      \item Experiments on deep neural networks with real-world datasets indicate that the proposed hybrid update framework consistently outperforms standard training strategies, demonstrating that our theoretical insights guide practical algorithm design in continual learning.
    \end{itemize}

%%%%%%%%%%%%%%%%%%%%%%%%%%%%%%%%%%%%%%%%%%%%%%%%%%%%%%%%%
\section{Previous Research}

    Empirical Research on Continual Learning. Existing continual learning methods can generally be categorized into regularization-based, replay-based, and architecture-based approaches \cite{survey2_2019,continual4_2024_survey,survey5_2023}. Regularization-based methods \cite{Regularization2_2017,Regularization4_2018,Regularization6_2019,Regularization8_2023} preserve previous knowledge by penalizing changes in parameters or prediction functions, and they typically require storing frozen copies of previous models as references. Weight regularization selectively constrains updates to parameters that are important for earlier tasks, while function regularization restricts intermediate representations or final predictions to mitigate forgetting through knowledge distillation. Replay-based methods \cite{replay2_2017,replay9_2020,replay12_2024,replay14_2025} revisit previous data distributions by storing or generating samples from previous tasks, and can be further divided into sample replay and generative replay. Sample replay stores a limited number of samples from previous tasks in a replay buffer and trains the model jointly with data from the current task. Generative replay, in contrast, introduces an auxiliary generative model to synthesize pseudo-replay data, with the generator itself being incrementally updated over time. Architecture-based methods \cite{Architecture3_2020,Architecture4_2022,Architecture6_2024,Architecture9_2025} allocate isolated parameters or submodules to each task during training. In recent years, continual learning built upon pre-trained models \cite{pretain1_2021,pretain2_2023,pretain4_2024} have enabled training across sequence tasks from a stronger initialization, further mitigating model's forgetting by leveraging transferable and generalizable features across tasks.

    Given that catastrophic forgetting primarily stems from parameter updates displacing previously learned knowledge, we analyze the following scenarios. When the model is fully frozen, it preserves performance on previously seen data but at the cost of being unable to acquire new knowledge; conversely, fine-tuning the model with new data allows it to capture current pattern and rapidly adapt to new tasks but often degrades prior experience. These contrasts motivate us to rethink: how can we prevent the loss of previously learned knowledge while training on new tasks, thereby enabling efficient and sustainable intelligent systems? And how many parameter updates are sufficient to learn a specific task while minimizing forgetting and achieving strong performance on incoming tasks?

    Theoretical Research on Continual Learning. Although empirical studies on continual learning have made substantial progress, related theoretical research \cite{theory1_2017,theory2_2022,theory3_2023} remains limited. In this context, \cite{theoryA_functionalregularisation} extends function regularization beyond weight regularization by modeling Gaussian processes as function priors for learning subsequent tasks. \cite{theoryB_teacherstudent} studies how teacher–student relationships influence knowledge distillation. \cite{theoryC_Howcatastrophic} connects continual learning with alternating projection and Kaczmarz methods, providing worst-case forgetting analyses. \cite{theoryD_theoreticalstudyOOD} decomposes class-incremental learning into intra-task prediction and task-identity prediction, linking the latter to out-of-distribution detection. \cite{theoryE_promptbased} formulates continual learning with pre-trained models as intra-task prediction, task-identity inference, and task-adaptive prediction. \cite{theoryF_Learnability} proves the learnability of class-incremental learning from a predictive probability viewpoint. \cite{theoryG_forgettingandgeneralization} analyzes forgetting and generalization errors in overparameterized models. \cite{qianyi} analyzed how the generalization performance of transfer learning evolves as the number of features increases, under both underfitting and overfitting conditions. They found that in scenarios with high noise levels and small true parameters, allocating additional redundant features to specific tasks leads to better generalization performance. \cite{theoryH_plasticity} identifies plasticity loss in long-sequence learning and proposes continual backpropagation. \cite{theoryI_Measuringforgetting} proposes new evaluation metric to quantify representation forgetting beyond prediction function forgetting.

    \setlength{\textfloatsep}{14pt}   %
    \begin{figure}[t]
        \centerline{\includegraphics[width=\columnwidth]{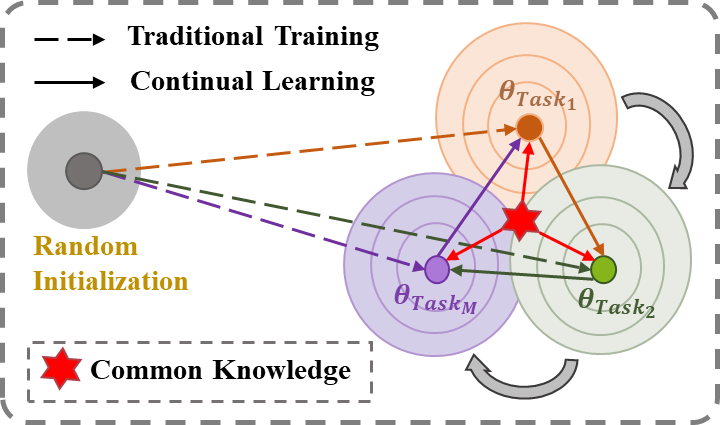}}
        \caption{
         Switching tasks shifts target parameter space, causing forgetting in continual learning. The red symbols shows that a shared space enables fast adaptation and knowledge retention.
      }
      \label{图片知识共享}
    \end{figure}

    However, existing theoretical research has not provided an explicit analysis of parameter space separation under different update strategies. This work differs in that it unifies two representative update paradigms, frozen training and initialized training, within an optimization framework that constrains parameter updates, and derives the optimal parameter update magnitude that minimizes forgetting, offering a perspective orthogonal to existing theoretical analyses.

%%%%%%%%%%%%%%%%%%%%%%%%%%%%%%%%%%%%%%%%%%%%%%%%%%%%%%%%%
\section{Problem Setup}

    In a continual learning scenario with $M$ sequential tasks, the dataset for task $k$ is denoted by $\mathcal{D}_k = \{(R_{(k)}^i, y_{(k)}^i)\}_{i=1}^{N_k}$, consisting of $N_k$ input-label pairs, where $k$ indexes the task identity. Starting from an initial parameter vector $\theta_{(0)}$, the model is trained sequentially, yielding parameter estimates $\theta_{(1)}, \theta_{(2)}, \ldots, \theta_{(M)}$. With analytical tractability, the Gaussian features and noise are adapted. The objective is to learn an optimal parameter update model under fixed model capacity that consistently satisfies: (1) a finite parameter overhead as the number of tasks increases, and (2) optimal magnitude of parameter updates across training stages. 
    
    To achieve the above objectives, the parameter space is split into shared and task-specific parts, with task-specific parts quantified to minimize catastrophic forgetting. The optimized model is denoted as $y_{(k)} = X_{(k)}^\top u_{(k)} + Z_{(k)}^\top q_{(k)} + \varepsilon_{(k)}$, where $X_{(k)}\in \mathbb R^{p\times n_{(k)}}$ represents the features associated with the shared parameters $u_{(k)} \in \mathbb R^{p}$, and $Z_{(k)} \in \mathbb R^{p_{(k)} \times n_{(k)}}$ denotes features associated with the task-specific parameters $q_{(k)} \in \mathbb R^{p_{(k)}}$. Performance on task $k$ is measured by squared loss, following the metric from \cite{theory4_2024,theoryG_forgettingandgeneralization,survey4_2023}.

    Note: For clarity, we describe the notation commonly used in this paper. Matrices are denoted by uppercase letters $\mathbf{X}$, and vectors by lowercase letters $\mathbf{x}$. For vectors $\mathbf{u}$ and $\mathbf{q}$, their inner product is $\langle \mathbf{u}, \mathbf{q} \rangle$. For matrices $\mathbf{X}$ and $\mathbf{Z}$, their inner product is denoted by $\langle \mathbf{X}, \mathbf{Z} \rangle_F$. Let $\|\mathbf{a}\|_2$ denote the $\ell_2$-norm of vector $\mathbf{a}$, and let $\|\mathbf{V}\|$ denote the spectral norm of matrix $\mathbf{V}$.

%%%%%%%%%%%%%%%%%%%%%%%%%%%%%%%%%%%%%%%%%%%%%%%%%%%%%%%%%
\section{Frozen Training vs Initialized Training}

    In this section, we first introduce initialized training and frozen training, two widely used training modes in continual learning. We then characterize the theoretical error and compare their performance analytically from a unified theoretical perspective. Next, we derive the optimal update conditions that minimize forgetting under a fixed model capacity. Finally, we demonstrate that frozen training outperforms initialization training when the parameter spaces are close in distance.

    \setlength{\textfloatsep}{14pt}   %
    \begin{figure}[ht]
      \centering
      \includegraphics[width=\columnwidth]{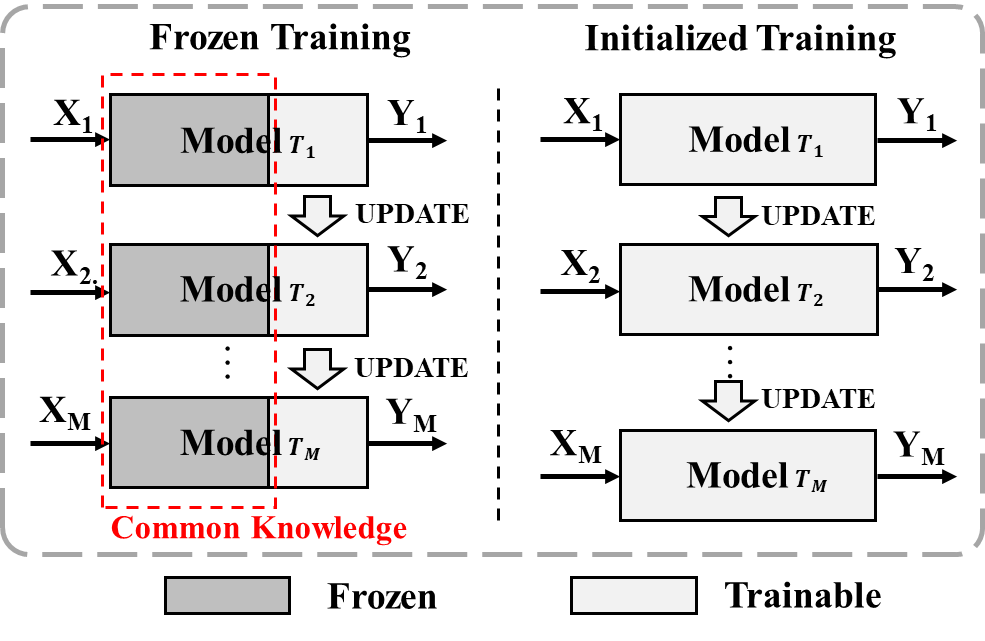}
      \caption{
        Frozen training and initialized training. Initialized training updates all parameters during training, whereas frozen training preserves general knowledge by fixing most parameters and adapts to new tasks using a few trainable parameters.
      }
      \label{icml-historical}
    \end{figure}

    Let $\tilde{u}_{(k-1)}, \tilde{q}_{(k-1)}$ denote the parameters after training the $k-1$-th task. When training the $k$-th task, a widely used approach is initialization training, in which all parameters are reoptimized starting from the previously learned model. An alternative strategy, referred to as frozen training, freezes a subset of parameters and allocates only a small number of parameters for learning new tasks. The update processes for these two training modes are formalized as follows.

   \textbf{Training Rule A ( Frozen Training ).} The knowledge acquisition is achieved by freezing the parameters $\tilde{u}_{(k-1)}$ and setting them as $\tilde{u}_{(k)}$, while training only $\tilde{q}_{(k)}$, a small set of parameters. The update process is as follows.
    \begin{equation}
    \label{正文公式_冻结训练表达式}
    \begin{aligned}
    &\arg\min_{q}||\widetilde{q}_{(k)}-\widetilde{q}_{(k-1)}||^{2},  \\
    \text{s.t.} \; Z_{(k)}^{T}q & = y_{(k)}  - X_{(k)}^{\top}\tilde u_{(k-1)} - Z_{(k)}^{\top}\tilde q_{(k-1)}
    \end{aligned}
    \end{equation}
    where $q=\widetilde{q}_{(k)}-\widetilde{q}_{(k-1)}$, and $\| u_{(k)}-u_{(j)}\|=\delta,k\neq j$ as the  parameter discrepancy over the shared subspace. Empirical analysis \cite{fewparameter1_2023} indicates that in continual learning, only a small subset of parameters is task-specific and sensitive to task changes, while most capture shared knowledge across tasks. Under frozen training, the model retains the parameters $\tilde{u}{(k-1)}$ to enable knowledge retention, and adapts to new tasks by learning the $\tilde{q}{(k)}$.

    \begin{remark}
      The \cref{正文公式_冻结训练表达式} and \cref{正文公式_初始化训练表达式} presents the expression for the case where $p_{(k)} > n_{(k)}$. The complementary case $n_{(k)} > p_{(k)}$, along with the corresponding error analysis, is deferred to Appendix A.1.
    \end{remark}

    \textbf{Training Rule B ( Initialized Training ) .} The learned parameters $(\tilde{u}_{(k-1)},\tilde{q}_{(k-1)})$ serve as initialization for current task, and the $\tilde{u}_{(k)}$ and $\tilde{q}_{(k)}$ are then joined optimized to iteratively update the learner. The update process is as follows.
    \begin{equation}
    \label{正文公式_初始化训练表达式}
    \begin{aligned}
    \operatorname*{arg\,min}_{ \tilde u_{(k)},\,\tilde q_{(k)} }
    &\ \|\tilde u_{(k)}-\tilde u_{(k-1)}\|^2
     + \|\tilde q_{(k)} - \tilde q_{(k-1)}\|^2, \\
    \text{s.t. }\;
    & X_{(k)}^\top \tilde u_{(k)} + Z_{(k)}^\top \tilde q_{(k)} = y_{(k)} .
    \end{aligned}
    \end{equation}
    
    where $\tilde u_{(k)}$ and $\tilde q_{(k)}$ are jointly optimized to achieve full fine-tuning of parameters. The following lemma provides the average difference between parameters trained and their prior true parameters under frozen training.

    \begin{lemma}
      \label{4.2引理_演化公式}
        Under frozen training, define $r = 1 - \frac{n_{(1)}}{p+p_{(1)}}$, $r_{A} = 1 - \frac{n_{(2)}}{p_{(2)}}$. After completing $(\tilde u_{(2)},\tilde q_{(2)})$, the average deviation from the previous task's parameters is given by:
        \begin{equation}
        \begin{aligned}
        &\mathbb{E}_{\tilde u,\tilde q}\!\left[
        \|\tilde u_{(2)}-u_{(1)}\|^2 + \|\tilde q_{(2)}-q_{(1)}\|^2 \right] =  \\
        &r[
        \lVert u_{(1)}\rVert^2
        + r_A \lVert q_{(1)}\rVert^2
        + \frac{n_{(2)}\lVert u_{(2)}\rVert^2}{p_{(2)}-n_{(2)}-1} ]  \\ 
        &+ (1-r_A)\lVert q_{(2)}-q_{(1)}\rVert^2 
        + \frac{n_{(2)}\sigma_{(2)}^2}{p_{(2)}-n_{(2)}-1}   \\     
        & + \frac{n_{(1)}v_{(1)}\sigma_{(1)}^2}{p+p_{(1)}-n_{(1)}-1} + \frac{n_{(2)}(1-r)\delta^2}{p_{(2)}-n_{(2)}-1} 
        \end{aligned}
        \end{equation}    
        where $v_{(1)}= 1 + \frac{p\,n_{(2)}}{(p+p_{(1)})(p_{(2)}-n_{(2)}-1)}
        - \frac{p_{(1)}n_{(2)}}{(p+p_{(1)})p_{(2)}}$.
    \end{lemma}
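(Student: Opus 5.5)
Under frozen training the two steps decouple, and I would compute the expectation by following the model from $\theta_{(0)}=0$ through task 1 and task 2. Task 1 starts from zero and fits all $p+p_{(1)}$ coordinates, since there is no previous $u$ to freeze. It therefore gives the minimum-norm interpolator
\[
\tilde w_{(1)}=W_1^{\dagger}y_{(1)}, \qquad W_1=\begin{bmatrix}X_{(1)}\\ Z_{(1)}\end{bmatrix}\in\mathbb R^{(p+p_{(1)})\times n_{(1)}} .
\]
With $P_1=W_1(W_1^\top W_1)^{-1}W_1^\top$, this can be written as
\[
\tilde w_{(1)}-w_{(1)} = -(I-P_1)w_{(1)} + W_1(W_1^\top W_1)^{-1}\varepsilon_{(1)} .
\]

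Task 2 freezes $\tilde u_{(2)}=\tilde u_{(1)}$ and updates
\[
\tilde q_{(2)}=\tilde q_{(1)}+Z_{(2)}(Z_{(2)}^\top Z_{(2)})^{-1}\bigl(y_{(2)}-X_{(2)}^\top\tilde u_{(1)}-Z_{(2)}^\top\tilde q_{(1)}\bigr).
\]
Substituting $y_{(2)}=X_{(2)}^\top u_{(2)}+Z_{(2)}^\top q_{(2)}+\varepsilon_{(2)}$ and writing $P_2$ for the projection onto the column space of $Z_{(2)}$ gives
\[
\tilde q_{(2)}-q_{(1)}=(I-P_2)(\tilde q_{(1)}-q_{(1)})+P_2(q_{(2)}-q_{(1)})+Z_{(2)}(Z_{(2)}^\top Z_{(2)})^{-1}\bigl(X_{(2)}^\top(u_{(2)}-\tilde u_{(1)})+\varepsilon_{(2)}\bigr).
\]
Together with $\tilde u_{(2)}-u_{(1)}=\tilde u_{(1)}-u_{(1)}$, this is an explicit linear-in-Gaussians representation of the deviation.

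Next I would take expectations with the standard Gaussian facts, conditioning on task-1 quantities first:
\begin{itemize}
\item $\mathbb E[P]=\frac{n}{d}I$ for a rank-$n$ projection onto a random subspace of $\mathbb R^d$, which produces the factors $r$ and $1-r_A$.
\item $\mathbb E[\operatorname{tr}(Z^\top Z)^{-1}]=\frac{n}{d-n-1}$ for the Wishart inverse, which produces every $\frac{n_{(2)}}{p_{(2)}-n_{(2)}-1}$ and $\frac{n_{(1)}}{p+p_{(1)}-n_{(1)}-1}$ term.
\item Independence of $X_{(2)}$ from $Z_{(2)}$ and from task 1, so that $\mathbb E\|Z_{(2)}(Z_{(2)}^\top Z_{(2)})^{-1}X_{(2)}^\top a\|^2=\|a\|^2\,\frac{n_{(2)}}{p_{(2)}-n_{(2)}-1}$.
\end{itemize}
Cross terms vanish because the noises are zero-mean and independent, and because $(I-P_2)$ and $P_2$ are orthogonal. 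The $q$-block then contributes $r_A\,\mathbb E\|\tilde q_{(1)}-q_{(1)}\|^2+(1-r_A)\|q_{(2)}-q_{(1)}\|^2$, plus the noise term $\frac{n_{(2)}\sigma_{(2)}^2}{p_{(2)}-n_{(2)}-1}$ and a term $\frac{n_{(2)}}{p_{(2)}-n_{(2)}-1}\mathbb E\|u_{(2)}-\tilde u_{(1)}\|^2$.

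I would then evaluate the task-1 expectations. We have $\mathbb E\|(I-P_1)w_{(1)}\|^2=r\|w_{(1)}\|^2$, and the noise part is $\frac{n_{(1)}\sigma_{(1)}^2}{p+p_{(1)}-n_{(1)}-1}$. The factor $v_{(1)}$ arises because the task-1 noise propagates into the $u$-block with weight $\frac{p}{p+p_{(1)}}$, which is then amplified by the $\frac{n_{(2)}}{p_{(2)}-n_{(2)}-1}$ factor. It also enters the $q$-block with weight $\frac{p_{(1)}}{p+p_{(1)}}$, which is shrunk by $r_A$ and yields the $-\frac{p_{(1)}n_{(2)}}{(p+p_{(1)})p_{(2)}}$ piece. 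Getting this split requires exchangeability of coordinates under $W_1$: the expected blockwise traces of $I-P_1$ and of $W_1(W_1^\top W_1)^{-2}W_1^\top$ are proportional to block dimension.

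For the term $\mathbb E\|u_{(2)}-\tilde u_{(1)}\|^2$, I would write $u_{(2)}-\tilde u_{(1)}=(u_{(2)}-u_{(1)})+(u_{(1)}-\tilde u_{(1)})$ and use $\|u_{(2)}-u_{(1)}\|=\delta$. This produces the $\delta^2$ term with coefficient $(1-r)$, together with the terms in $\|u_{(2)}\|^2$ and $\|u_{(1)}\|^2$ that are regrouped under the prefactor $r$.

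I expect the main obstacle to be this last regrouping into exactly the stated form. It mixes the bias $(I-P_1)u_{(1)}$ with $u_{(2)}$, and it requires the cross term $\mathbb E\langle u_{(2)}-u_{(1)},(I-P_1)u_{(1)}\rangle$. Since $\mathbb E[I-P_1]=rI$, I would evaluate that cross term as $r\langle u_{(2)}-u_{(1)},u_{(1)}\rangle$. I would first try expanding everything in $\|u_{(1)}\|^2$, $\|u_{(2)}\|^2$ and $\delta^2$ via the polarization identity, and then match coefficients against the statement. I suspect some of the bookkeeping in the stated formula, for instance the placement of $\|u_{(2)}\|^2$ under the factor $r$, reflects the specific convention the authors use for which block is initialized at zero. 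I would check that convention against Appendix A.1 before finalizing.
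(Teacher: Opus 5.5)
\textbf{Where your plan is sound.} Your route matches the computation the statement encodes: the min-norm task-1 interpolator from $\theta_{(0)}=0$, your exact decomposition of $\tilde q_{(2)}-q_{(1)}$ under freezing, and then the projection and inverse-Wishart moments. Together they give exactly
\[
\begin{aligned}
&\mathbb{E}\|\tilde u_{(1)}-u_{(1)}\|^2+r_A\,\mathbb{E}\|\tilde q_{(1)}-q_{(1)}\|^2+(1-r_A)\|q_{(2)}-q_{(1)}\|^2\\
&+\frac{n_{(2)}}{p_{(2)}-n_{(2)}-1}\Bigl(\mathbb{E}\|u_{(2)}-\tilde u_{(1)}\|^2+\sigma_{(2)}^2\Bigr).
\end{aligned}
\]
The regrouping you single out as the main obstacle is not an obstacle, and no initialization convention is involved. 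Since $\mathbb{E}[\tilde u_{(1)}]=(1-r)u_{(1)}$ holds exactly, polarization gives $2r\langle u_{(2)}-u_{(1)},u_{(1)}\rangle=r(\|u_{(2)}\|^2-\|u_{(1)}\|^2-\delta^2)$. Therefore, provided the bias part of $\mathbb{E}\|\tilde u_{(1)}-u_{(1)}\|^2$ equals $r\|u_{(1)}\|^2$, you get
\[
\mathbb{E}\|u_{(2)}-\tilde u_{(1)}\|^2=(1-r)\delta^2+r\|u_{(2)}\|^2+\frac{p}{p+p_{(1)}}\cdot\frac{n_{(1)}\sigma_{(1)}^2}{p+p_{(1)}-n_{(1)}-1}.
\]
The three task-1 noise weights $\frac{p}{p+p_{(1)}}$, $r_A\frac{p_{(1)}}{p+p_{(1)}}$ and $\frac{p}{p+p_{(1)}}\cdot\frac{n_{(2)}}{p_{(2)}-n_{(2)}-1}$ then add up to exactly $v_{(1)}$.

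\textbf{The gap.} The real gap is that proviso. You need the blockwise bias identities $\mathbb{E}\|E_u(I-P_1)w_{(1)}\|^2=r\|u_{(1)}\|^2$ and $\mathbb{E}\|E_q(I-P_1)w_{(1)}\|^2=r\|q_{(1)}\|^2$, where $E_u,E_q$ are the coordinate projectors onto the two blocks. Exchangeability does not give them. Block traces proportional to block size do settle the noise term, because that term is a trace. The bias, however, is quadratic in the uniformly random rank-$(d-n_{(1)})$ projection $Q=I-P_1$, with $d=p+p_{(1)}$. Rotation invariance and the Beta second moment of $v^\top Qv$ give $\mathbb{E}[QE_uQ]=\alpha E_u+\beta p\,I$, with $\alpha+\beta d=r$ and $\beta=\frac{n_{(1)}(d-n_{(1)})}{d(d+2)(d-1)}>0$. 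Hence
\[
\mathbb{E}\|E_uQw_{(1)}\|^2=r\|u_{(1)}\|^2+\beta\bigl(p\|q_{(1)}\|^2-p_{(1)}\|u_{(1)}\|^2\bigr).
\]
For example, if $u_{(1)}=0$, the fitted $\tilde u_{(1)}$ is still generically nonzero, so its error is positive rather than $0$. Propagating this adds $\beta\bigl(p\|q_{(1)}\|^2-p_{(1)}\|u_{(1)}\|^2\bigr)\bigl(1-r_A+\frac{n_{(2)}}{p_{(2)}-n_{(2)}-1}\bigr)$ to the stated right-hand side.

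\textbf{What you can claim.} The stated coefficients, $r$ on $\|u_{(1)}\|^2$ and $rr_A$ on $\|q_{(1)}\|^2$, are precisely what the per-block identification produces. So your argument proves the lemma as written only under the extra condition $\|u_{(1)}\|^2/p=\|q_{(1)}\|^2/p_{(1)}$. Otherwise you must state the per-block bias step openly as an approximation; it cannot be credited to exchangeability.
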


  Appendix A.1 provides the proof of Lemma~\ref{4.2引理_演化公式}, discusses the case where $p_{(2)} < n_{(2)}$, and presents the counterpart results for initialized training. Lemma~\ref{4.2引理_演化公式} shows that the average magnitude relative to the original parameters depends on task discrepancy $\lVert q_{(2)}-q_{(1)}\rVert^2$, the noise levels $\sigma_{(1)}$ and $\sigma_{(2)}$, and the parameter-sample ratio $r$ and $r_{A}$. The following lemma presents the average forgetting under the frozen training.

    \begin{lemma}
    \label{4.3引理_A遗忘误差}
        Under frozen training, the expected forgetting $\mathbb{E}_{\tilde u,\tilde q}[Forgetting_{FRO}]$ for any $p_{(2)}> n_{(2)}$ with respect to the previous task after parameter updates is given by: 
        \begin{equation}
        \label{正文公式_冻结训练遗忘误差}
        \begin{aligned}
         &\mathbb{E}_{\tilde u,\tilde q}[Forgetting_{FRO}] =
         \frac{n_{(2)}(1-r)\delta^2}{p_{(2)} - n_{(2)} - 1}  \\
        &+r \Bigl[(r_A - 1)\|q_{(1)}\|^2 + \frac{n_{(2)}}{p_{(2)}-n_{(2)}- 1} \|u_{(2)}\|^2 \Bigr] \\
        &+ (1-r_A)\|q_{(2)} - q_{(1)}\|^2 
        + ( \frac{p n_{(2)}}{(p+p_{(1)})(p_{(2)}-n_{(2)}-1)} \\
        &- \frac{p_{(1)} n_{(2)}}{(p+p_{(1)})p_{(2)}} ) \frac{n_{(1)}\sigma_{(1)}^2}{p+p_{(1)}-n_{(1)}-1} 
        + \frac{n_{(2)}\sigma_{(2)}^2}{p_{(2)}-n_{(2)}-1}
        \end{aligned}
        \end{equation}
    \end{lemma}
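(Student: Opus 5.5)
Comparing the two displays, the target expression is exactly the expression of Lemma~\ref{4.2引理_演化公式} with three terms removed: $r\|u_{(1)}\|^2$, $r\|q_{(1)}\|^2$ (the coefficient of $r\|q_{(1)}\|^2$ goes from $r_A$ to $r_A-1$), and $\frac{n_{(1)}\sigma_{(1)}^2}{p+p_{(1)}-n_{(1)}-1}$ (the factor $v_{(1)}$ loses its leading $1$). These removed terms are precisely the expected error of the first-stage estimator $(\tilde u_{(1)},\tilde q_{(1)})$ relative to the task-1 truth. For the minimum-norm interpolator trained from $\theta_{(0)}=0$ on Gaussian data with $p+p_{(1)}>n_{(1)}$, the standard overparameterized computation gives
\begin{equation*}
\mathbb{E}\bigl[\|\tilde u_{(1)}-u_{(1)}\|^2+\|\tilde q_{(1)}-q_{(1)}\|^2\bigr]
= r\bigl(\|u_{(1)}\|^2+\|q_{(1)}\|^2\bigr)+\frac{n_{(1)}\sigma_{(1)}^2}{p+p_{(1)}-n_{(1)}-1}.
\end{equation*}
So the plan is to define forgetting in the usual way, following \cite{theoryG_forgettingandgeneralization}: the expected task-1 error after training task 2 minus the expected task-1 error right after training task 1, both measured as squared parameter distance to $(u_{(1)},q_{(1)})$. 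The lemma then reduces to subtracting the display above from Lemma~\ref{4.2引理_演化公式}.

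The key steps are as follows.
\begin{enumerate}
\item Write $\mathbb{E}[Forgetting_{FRO}]=\mathbb{E}\bigl[\|\tilde u_{(2)}-u_{(1)}\|^2+\|\tilde q_{(2)}-q_{(1)}\|^2\bigr]-\mathbb{E}\bigl[\|\tilde u_{(1)}-u_{(1)}\|^2+\|\tilde q_{(1)}-q_{(1)}\|^2\bigr]$. Under squared loss with isotropic Gaussian test features, the task-1 test risk equals the parameter error plus $\sigma_{(1)}^2$, so the noise constant cancels.
\item Compute the second term. The projection $P_1$ onto the row space of $[X_{(1)};Z_{(1)}]$ is a uniformly random rank-$n_{(1)}$ projection in dimension $p+p_{(1)}$, so $\mathbb{E}\|(I-P_1)\theta_{(1)}\|^2=r\|\theta_{(1)}\|^2$. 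The noise part uses $\mathbb{E}\,\mathrm{tr}\bigl((WW^\top)^{-1}\bigr)=n_{(1)}/(p+p_{(1)}-n_{(1)}-1)$ for inverse Wishart matrices.
\item Substitute Lemma~\ref{4.2引理_演化公式} and simplify. This regroups $r_A\|q_{(1)}\|^2-\|q_{(1)}\|^2$ into $(r_A-1)\|q_{(1)}\|^2$ and $v_{(1)}-1$ into the stated bracket. All other terms, namely the $\delta^2$ term, the $\|u_{(2)}\|^2$ term, the $(1-r_A)\|q_{(2)}-q_{(1)}\|^2$ term and the $\sigma_{(2)}^2$ term, pass through unchanged.
\end{enumerate}

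I expect the main obstacle to be consistency rather than computation. The block structure matters: in stage 1 the shared and specific coordinates are trained jointly in dimension $p+p_{(1)}$, whereas in stage 2 only $q$ moves, in dimension $p_{(2)}$. For the subtraction to be legitimate, the stage-1 error must be split between $u$ and $q$ consistently with how Lemma~\ref{4.2引理_演化公式} was derived. In particular, the cross terms between the stage-1 residual and the stage-2 update must have already been absorbed in that lemma, which holds by independence of $Z_{(2)}$, $\varepsilon_{(2)}$ from stage-1 data and zero-mean noise.

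I will therefore check that the $\|u\|$ and $\|q\|$ coefficients of $r$ match blockwise. By rotational invariance, $I-P_1$ shrinks each fixed vector by the factor $r$ in expectation, so the split is legitimate. If the forgetting is instead defined as a test-loss difference, I would verify that the test covariance is the identity, so that the loss equals parameter distance.
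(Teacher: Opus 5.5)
Your proposal is correct and follows the paper's route. Forgetting is the task-1 error after stage 2 minus the task-1 error after stage 1, so the lemma is Lemma~\ref{4.2引理_演化公式} minus the single-task min-norm error $r(\|u_{(1)}\|^2+\|q_{(1)}\|^2)+\frac{n_{(1)}\sigma_{(1)}^2}{p+p_{(1)}-n_{(1)}-1}$, exactly as you set it up.

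The blockwise consistency check you flag as the main obstacle is unnecessary, because only the total $\mathbb{E}\|(I-P_1)\theta_{(1)}\|^2=r\|\theta_{(1)}\|^2$ enters the subtraction. Your justification for that check is also wrong: $\mathbb{E}[(I-P_1)\theta_{(1)}]=r\theta_{(1)}$ does not give $\mathbb{E}\|[(I-P_1)\theta_{(1)}]_u\|^2=r\|u_{(1)}\|^2$, since with $u_{(1)}=0$ the $u$-block of the stage-1 residual is still nonzero almost surely. So do not rely on that split.
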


    The proof is provided in Appendix A.2. Lemma~\ref{4.3引理_A遗忘误差} shows that with $p+p_{(2)}$ fixed, increasing $p_{(2)}$ weakens general knowledge acquisition, whereas decreasing $p_{(2)}$ results in insufficient training. These two effects impose mutual constraints on the average forgetting rate, and the following theorem characterizes the minimal update condition.

    \begin{theorem}
    \label{4.4定理_EFA求导最优更新量}
    With fixed capacity $p+p_{(2)}=K$, an extremum arises with trainable $p_{(2)}$ under frozen training, provided the following condition regarding $\xi$ and $\eta$ holds.
        \begin{equation}
        \begin{aligned}
        (\xi-\eta)p_{(2)}^{2}+(n_{(2)}+1)^{2}\xi=2(n_{(2)}+1)p_{(2)}\xi
        \end{aligned}
        \end{equation} 
        And the parameter $p_{(2)}$ is given by: 
        \begin{equation}
        \begin{aligned}
        p_{(2)}=\frac{(n_{(2)}+1)(\xi+\sqrt{\xi \eta})}{\xi-\eta}
        \end{aligned}
        \end{equation}     
         Where $\xi$ and $\eta$ satisfy $\xi > \eta > 0$, $K-n_{(1)}-1>0$, and their expressions are given by the following formulas
        \begin{equation}
        \begin{aligned}
        &\xi=r\|q_{(1)}\|^{2}-\|q_{(2)}-q_{(1)}\|^{2} \\
        &\eta=r||u_{(2)}||^{2} +(1-r)\delta^{2} +\sigma_{(2)}^{2} \\
        &\qquad +\frac{n_{(1)}p\sigma_{(1)}^{2}}{(K-n_{(1)}-1)K} 
        \end{aligned}
        \end{equation}
    \end{theorem}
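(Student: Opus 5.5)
The plan is to treat the expected forgetting of Lemma~\ref{4.3引理_A遗忘误差} as a function of the single variable $p_{(2)}$ under the constraint $p=K-p_{(2)}$, differentiate, and solve the first-order condition. First substitute $r_A=1-n_{(2)}/p_{(2)}$, so that $(r_A-1)=-n_{(2)}/p_{(2)}$ and $(1-r_A)=n_{(2)}/p_{(2)}$. Then collect all terms of \eqref{正文公式_冻结训练遗忘误差} into two groups. The first group consists of terms proportional to $n_{(2)}/p_{(2)}$: namely $-r\|q_{(1)}\|^2$ and $\|q_{(2)}-q_{(1)}\|^2$, together with the $-\frac{p_{(1)}n_{(2)}}{(p+p_{(1)})p_{(2)}}$ part of the noise coefficient. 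The second group consists of terms proportional to $n_{(2)}/(p_{(2)}-n_{(2)}-1)$: namely $r\|u_{(2)}\|^2$, $(1-r)\delta^2$, $\sigma_{(2)}^2$, and the $\frac{p n_{(2)}}{(p+p_{(1)})(p_{(2)}-n_{(2)}-1)}$ noise part. Since the theorem's $\eta$ contains $\frac{n_{(1)}p\sigma_{(1)}^2}{(K-n_{(1)}-1)K}$, I will identify $p+p_{(1)}$ with the fixed capacity $K$ (the first task uses the same total width). This makes $r=1-n_{(1)}/K$ independent of $p_{(2)}$.

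With this bookkeeping the forgetting takes the form
\begin{equation*}
F(p_{(2)})=C-\frac{n_{(2)}\,\xi'}{p_{(2)}}+\frac{n_{(2)}\,\eta}{p_{(2)}-n_{(2)}-1},
\end{equation*}
with $\eta$ exactly as in the statement. The first-group coefficient is $\xi'=r\|q_{(1)}\|^2-\|q_{(2)}-q_{(1)}\|^2$ plus the small $p_{(1)}\sigma_{(1)}^2$ correction. I will argue that this correction is either absorbed into $C$ or neglected, to match the stated $\xi$. This matching is the step I expect to be the main obstacle: making the grouping reproduce \emph{exactly} the stated $\xi$ and $\eta$. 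The terms $p$ and $p_{(1)}$ inside the noise coefficient secretly depend on $p_{(2)}$ through $p=K-p_{(2)}$. I will first check whether the authors treat $p$ as frozen at the prior-task value inside $\eta$, which seems to be the case since $\eta$ keeps an explicit $p$. If so, I will treat it as a constant for the differentiation, accepting this as the intended level of approximation.

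Next, differentiate:
\begin{equation*}
F'(p_{(2)})=\frac{n_{(2)}\xi}{p_{(2)}^2}-\frac{n_{(2)}\eta}{(p_{(2)}-n_{(2)}-1)^2}.
\end{equation*}
Setting $F'(p_{(2)})=0$ gives $\xi(p_{(2)}-n_{(2)}-1)^2=\eta p_{(2)}^2$. Expanding yields precisely
\begin{equation*}
(\xi-\eta)p_{(2)}^2+(n_{(2)}+1)^2\xi=2(n_{(2)}+1)p_{(2)}\xi,
\end{equation*}
which is the stated condition. Solving the quadratic gives
\begin{equation*}
p_{(2)}=\frac{(n_{(2)}+1)(\xi\pm\sqrt{\xi\eta})}{\xi-\eta}.
\end{equation*}
Equivalently, $\sqrt{\xi}\,(p_{(2)}-n_{(2)}-1)=\pm\sqrt{\eta}\,p_{(2)}$. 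The admissibility requirement $p_{(2)}>n_{(2)}+1$ (needed for Lemma~\ref{4.3引理_A遗忘误差} to be finite) selects the $+$ root, because the $-$ root gives $p_{(2)}<n_{(2)}+1$ when $\xi>\eta$. The assumptions $\xi>\eta>0$ guarantee a positive, real root with a positive denominator. The condition $K-n_{(1)}-1>0$ makes the first-task term in $\eta$ well defined.

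Finally, I will confirm that the critical point is a genuine extremum. On $(n_{(2)}+1,\infty)$, $F'$ is negative near $n_{(2)}+1$, since the $\eta$-term blows up. As $p_{(2)}\to\infty$, the sign of $F'$ is governed by $\xi-\eta>0$, so $F'$ is positive there. Hence $F'$ changes sign exactly once, and the root is the minimizer of forgetting, which justifies calling it the optimal update magnitude.
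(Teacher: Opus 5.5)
Your proposal is correct and follows the paper's proof essentially step for step. The paper likewise differentiates the frozen-training forgetting in $p_{(2)}$ while holding $p$ and $r$ fixed and setting $p+p_{(1)}=K$, obtains $\xi(p_{(2)}-n_{(2)}-1)^2=\eta\, p_{(2)}^2$, and solves the quadratic.

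One detail: the paper's derivative simply omits the $\frac{p_{(1)}n_{(2)}}{(p+p_{(1)})p_{(2)}}$ noise contribution. So ``neglected'' is the option that reproduces the stated $\xi$; ``absorbed into $C$'' does not work under your conventions, because that term still carries a factor $1/p_{(2)}$. Your root selection via $p_{(2)}>n_{(2)}+1$ and your sign-change argument that the critical point is a minimizer are extra rigor the paper does not supply.
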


        \textit{Proof.} According to $\mathbb{E}_{\tilde u,\tilde q}[Forgetting_{FRO}]$ in \cref{正文公式_冻结训练遗忘误差}, taking the derivative with respect to $p_{(2)}$ yielding
        \begin{equation}
        \begin{aligned}
        &\frac{\partial [ \mathbb{E}_{\tilde u,\tilde q}]}{\partial p_{(2)}} =\frac{n_{(2)}r\|q_{(1)}\|^{2}}{p_{(2)}^{2}}
        -\frac{n_{(2)}r\|u_{(2)}\|^{2}}{(p_{(2)}-n_{(2)}-1)^{2}}  \\
        &-\frac{n_{(2)}(1-r)\delta^{2}}{(p_{(2)}-n_{(2)}-1)^{2}}
        -\frac{n_{(2)}\|q_{(2)}-q_{(1)}\|^{2}}{p_{(2)}^{2}}   \\
        &-\frac{pn_{(2)}}{K(p_{(2)}-n_{(2)}-1)^{2}}\cdot\frac{n_{(1)}\sigma_{(1)}^{2}}{(K-n_{(1)}-1)} \\
        &-\frac{n_{(2)}\sigma_{(2)}^{2}}{(p_{(2)}-n_{(2)}-1)^{2}}              
        \end{aligned}
        \end{equation}   
    Let $p_{(2)} = x$. Then, this derivative becomes a function of $p_{(2)}$. The points where this expression equals zero can be calculated to obtain the corresponding equation  
        \begin{equation}
        \begin{aligned}
         &\left[r\|q_{(1)}\|^{2}-\|q_{(2)}-q_{(1)}\|^{2}\right]\cdot(x-n_{(2)}-1)^{2}  \\
        =&x^{2}\cdot[r\|u_{(2)}\|^{2}+(1-r)\delta^{2} 
          +v_{(2)}\sigma_{(1)}^{2}+\sigma_{(2)}^{2}]   
        \end{aligned}
        \end{equation}
        where $v_{(2)}=\frac{p}{p+p_{(1)}}\cdot\frac{n_{(1)}}{p+p_{(1)}-n_{(1)}-1} $. Solving the above equation yields the solution satisfy the condition as follows
        \begin{equation}
        \begin{aligned}
        x=\frac{(n_{(2)}+1)(\xi+\sqrt{\xi \eta})}{\xi-\eta}
        \end{aligned}
        \end{equation}  
    The complete version are provided in Appendix A.3. Theorem~\ref{4.4定理_EFA求导最优更新量} shows that the average forgetting with respect to the trainable parameter $p_{(2)}$ reaches an extremum when $\xi > \eta > 0$, beyond which further increases in $p_{(2)}$ yield diminishing returns. In particular, a plateau occurs when $\|q_{(2)}-q_{(1)}\|$ is small. The following lemma describes average forgetting under initialized training.

    \begin{lemma}
      \label{4.5引理_B遗忘误差}
        Under initialized training, define $r_{B} = 1-\frac{n_{(2)}}{p+p_{(2)}}$. The expected forgetting $\mathbb{E}_{\tilde u,\tilde q}[Forgetting_{TUN}]$ relative to the previous task is expressed as follows.
        \begin{equation}
        \label{正文公式_初始化训练遗忘误差}
        \begin{aligned}
        &\mathbb{E}_{\tilde u,\tilde q}[Forgetting_{TUN}]
        = r [(r_B - 1)(\|u_{(1)}\|^2 + \|q_{(1)}\|^2)]  \\
        &   + (1-r_B)\|q_{(2)}-q_{(1)}\|^2 + (1-r_B)\delta^2    \\
        &  + \frac{n_{(1)}(r_B-1)\sigma_{(1)}^2}{p+p_{(1)}-n_{(1)}-1} + \frac{n_{(2)}\sigma_{(2)}^2}{p+p_{(2)}-n_{(2)}-1}
        \end{aligned}
        \end{equation}  
    \end{lemma}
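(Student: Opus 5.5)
We prove Lemma~\ref{4.5引理_B遗忘误差} the same way as Lemma~\ref{4.3引理_A遗忘误差}, but with the whole parameter vector treated as a single block. Stack $w=(u;q)\in\mathbb R^{p+p_{(2)}}$ and $F_{(k)}=(X_{(k)};Z_{(k)})$ with i.i.d.\ standard Gaussian entries. Rule~B in \cref{正文公式_初始化训练表达式} is then the minimum-norm interpolation
\begin{equation*}
\tilde w_{(2)}=\tilde w_{(1)}+F_{(2)}\bigl(F_{(2)}^\top F_{(2)}\bigr)^{-1}\bigl(y_{(2)}-F_{(2)}^\top\tilde w_{(1)}\bigr),
\end{equation*}
which is valid when $p+p_{(2)}>n_{(2)}$. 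Write $P_{(2)}=F_{(2)}(F_{(2)}^\top F_{(2)})^{-1}F_{(2)}^\top$ for the projection onto the row space. Substituting $y_{(2)}=F_{(2)}^\top w_{(2)}+\varepsilon_{(2)}$ gives
\begin{equation*}
\tilde w_{(2)}-w_{(1)}=(I-P_{(2)})(\tilde w_{(1)}-w_{(1)})+P_{(2)}(w_{(2)}-w_{(1)})+F_{(2)}(F_{(2)}^\top F_{(2)})^{-1}\varepsilon_{(2)}.
\end{equation*}
Forgetting is measured, as in Lemma~\ref{4.3引理_A遗忘误差}, by the increase of this squared deviation over the deviation $\tilde w_{(1)}-w_{(1)}$ after task~1.

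The plan is to expand the squared norm and take expectations conditionally on the independent Gaussian matrices, using three facts.
\begin{enumerate}
\item $P_{(2)}$ is a uniformly random rank-$n_{(2)}$ projection. Hence $\mathbb E\|(I-P_{(2)})a\|^2=r_B\|a\|^2$ and $\mathbb E\|P_{(2)}b\|^2=(1-r_B)\|b\|^2$ for any fixed $a,b$.
\item Cross terms vanish, since $(I-P)P=0$ and the noise is zero-mean and independent.
\item The inverse-Wishart identity gives $\mathbb E\,\mathrm{tr}(F^\top F)^{-1}=\frac{n_{(2)}}{p+p_{(2)}-n_{(2)}-1}$. This produces the $\sigma_{(2)}^2$ term.
\end{enumerate}

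For $\tilde w_{(1)}-w_{(1)}$ we reuse the first-stage result from Lemma~\ref{4.2引理_演化公式} (Appendix~A.1). Its expected squared norm is $r(\|u_{(1)}\|^2+\|q_{(1)}\|^2)+\frac{n_{(1)}\sigma_{(1)}^2}{p+p_{(1)}-n_{(1)}-1}$, since training starts from $\theta_{(0)}=0$ with ratio $r$. Multiplying by $r_B$ and subtracting the pre-update deviation gives the factor $(r_B-1)$ on both the $r(\|u_{(1)}\|^2+\|q_{(1)}\|^2)$ term and the $\sigma_{(1)}^2$ term.

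The task-discrepancy term $(1-r_B)\|w_{(2)}-w_{(1)}\|^2$ splits blockwise as $\|u_{(2)}-u_{(1)}\|^2+\|q_{(2)}-q_{(1)}\|^2$. The first piece equals $\delta^2$ by the definition of $\delta$, so this term yields $(1-r_B)\|q_{(2)}-q_{(1)}\|^2+(1-r_B)\delta^2$. Summing the three contributions gives \cref{正文公式_初始化训练遗忘误差}.

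The main obstacle is bookkeeping of dimensions. Task~1 uses a $(p+p_{(1)})$-dimensional space, while task~2 uses $p+p_{(2)}$, so $q_{(1)}$ and $q_{(2)}$ must be embedded in a common space, padding with zeros where the blocks differ. We must check that the random projection $P_{(2)}$ acts isotropically on the embedded error vector, which is independent of $F_{(2)}$. This is the step where I would follow the Appendix~A.1 conventions exactly. I would verify the isotropy claim first via rotational invariance of the Gaussian matrix $F_{(2)}$, conditioning on $\tilde w_{(1)}$.
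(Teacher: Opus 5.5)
Your proposal is correct and takes essentially the same approach as the paper's: treat $(u;q)$ as one block so that Rule~B becomes a minimum-norm interpolation from $\tilde w_{(1)}$, and use $\mathbb E[P_{(2)}]=\frac{n_{(2)}}{p+p_{(2)}}I$, orthogonality of the cross terms, and the inverse-Wishart trace for the $\sigma_{(2)}^2$ term. Subtracting the first-stage error $r(\|u_{(1)}\|^2+\|q_{(1)}\|^2)+\frac{n_{(1)}\sigma_{(1)}^2}{p+p_{(1)}-n_{(1)}-1}$ (the same baseline the paper subtracts to pass from Lemma~\ref{4.2引理_演化公式} to Lemma~\ref{4.3引理_A遗忘误差}) then reproduces every term of the stated formula; the dimension-matching issue you flag is glossed over in the paper too, and your zero-padding convention handles it cleanly when $p_{(2)}\ge p_{(1)}$ or when all tasks share one ambient space.
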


   The proof is provided in Appendix A.4. From the above formula, $\mathbb{E}_{\tilde u,\tilde q}[Forgetting_{TUN}]$ depends on  $\|q_{(1)}\|$, $\|u_{(1)}\|$, $\sigma_{(1)}$, $\sigma_{(2)}$, $\delta$ and $\|q_{(2)}-q_{(1)}\|$, whereas $\mathbb{E}_{\tilde u,\tilde q}[Forgetting_{FRO}]$ depends on $\|q_{(1)}\|$, $\|u_{(2)}\|$, $\sigma_{(1)}$, $\sigma_{(2)}$, $\delta$ and $\|q_{(2)}-q_{(1)}\|$. Compared to \cref{正文公式_冻结训练遗忘误差}, initialized training introducing trainable parameter reduces the term involving $\sigma_{(2)}$ but also introduces term $r(r_B - 1)\|u_{(1)}\|^2$. The following theorem compares forgetting under frozen training and initialized training.

    \begin{theorem}
    \label{4.6定理_FA<FB}
     For frozen training and initialized training, the expected forgetting for any $p_{(2)}>n_{(2)}$ satisfies $\mathbb{E}_{\tilde u,\tilde q}[Forgetting_{FRO}] < \mathbb{E}_{\tilde u,\tilde q}[Forgetting_{TUN}]$ under
        \begin{equation}
        \label{正文公式_FA小于FB的条件}
        \begin{aligned}
        &\gamma_{1}\|u_{(1)}\|^{2}
        +\gamma_{2}\|u_{(2)}\|^{2}
        +r\|q_{(2)}-q_{(1)}\|^{2}  \\
        & +\gamma_{3}\delta^{2}
        +\gamma_{4}\sigma_{(1)}^{2}
        +\gamma_{5}\sigma_{(2)}^{2}<\|q_{(1)}\|^{2}
        \end{aligned}
        \end{equation}
    Where the formulaic expressions for $\gamma_{1}, \gamma_{2}, \gamma_{3}, \gamma_{4},$ and $\gamma_{5}$, together with the corresponding expressions for the constants $v_{(3)}$ through $v_{(6)}$, are provided as follows
        \begin{align}
        &\gamma_{1}=\frac{1-r_{B}}{r_{B}-r_{A}}; \\
        & \gamma_{2}=\frac{n_{(2)}}{(r_{B}-r_{A})v_{(4)}};          \\
        & \gamma_{3}=\frac{(1-r) n_{(2)}-(1-r_{B})v_{(4)}}{r(r_{B}-r_{A})v_{(4)}} ; \\
        & \gamma_{4}= \frac{ n_{(1)} \left[p n_{(2)}-(1-r_{A})p_{(1)}v_{(4)}\right] }{r(r_{B}-r_{A})(p+p_{(1)})v_{(4)}v_{(5)}}    \\
        &\qquad + \frac{ \left[(1-r_{B})(p+p_{(1)})v_{(4)} \right] n_{(1)}  }{r(r_{B}-r_{A})(p+p_{(1)})v_{(4)}v_{(5)}} ;  \notag \\
        & \gamma_{5}= \frac{p n_{(2)}}{r(r_{B}-r_{A})(p+p_{(2)}-n_{(2)}-1)v_{(4)}} ;  \\
        & v_{(3)}=1+\frac{2n_{(2)}}{p_{(2)}-n_{(2)}-1};   \\
        & v_{(4)}=p_{(2)}-n_{(2)}-1 ;    \\
        & v_{(5)}=p+p_{(1)}-n_{(1)}-1 ;  \\
        & v_{(6)}=p_{(2)}+n_{(2)}-1 .   
        \end{align}
    \end{theorem}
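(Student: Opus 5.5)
Comparing the closed forms of Lemma~\ref{4.3引理_A遗忘误差} and Lemma~\ref{4.5引理_B遗忘误差} settles the claim. Write $F_{A}=\mathbb{E}_{\tilde u,\tilde q}[Forgetting_{FRO}]$ and $F_{B}=\mathbb{E}_{\tilde u,\tilde q}[Forgetting_{TUN}]$, form $F_{B}-F_{A}$, and collect it as a linear combination of the six quantities $\|u_{(1)}\|^2,\|u_{(2)}\|^2,\|q_{(1)}\|^2,\|q_{(2)}-q_{(1)}\|^2,\delta^2,\sigma_{(1)}^2,\sigma_{(2)}^2$. The inequality $F_{A}<F_{B}$ becomes $F_{B}-F_{A}>0$. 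Isolate the $\|q_{(1)}\|^2$ term on one side and divide by its (positive) coefficient. This should give exactly the form \eqref{正文公式_FA小于FB的条件}, and the $\gamma_i$ are read off as ratios of coefficients.

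\textbf{Coefficient bookkeeping.} I will use $1-r_A=n_{(2)}/p_{(2)}$, $1-r_B=n_{(2)}/(p+p_{(2)})$ and $v_{(4)}=p_{(2)}-n_{(2)}-1$, and go term by term.
\begin{itemize}
\item $\|q_{(1)}\|^2$: the coefficient in $F_B$ is $r(r_B-1)$, and in $F_A$ it is $r(r_A-1)$. The difference is $r(r_B-r_A)$, which is positive because $p+p_{(2)}>p_{(2)}$ forces $r_B>r_A$. Dividing by $r(r_B-r_A)$ is therefore legitimate and preserves the inequality. This is the common denominator visible in every $\gamma_i$, with the factor $r$ cancelling for $\gamma_1$ and $\gamma_2$, as stated.
\item $\|u_{(1)}\|^2$: it appears only in $F_B$, with coefficient $r(r_B-1)$. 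Moving it across gives $\gamma_1=(1-r_B)/(r_B-r_A)$.
\item $\|u_{(2)}\|^2$: it appears only in $F_A$, with coefficient $r n_{(2)}/v_{(4)}$. This gives $\gamma_2$.
\item $\delta^2$: the contribution is $n_{(2)}(1-r)/v_{(4)}-(1-r_B)$. Putting it over $v_{(4)}$ gives $\gamma_3$.
\item $\sigma_{(1)}^2$: combine the $F_A$ coefficient $\bigl(\frac{pn_{(2)}}{(p+p_{(1)})v_{(4)}}-\frac{p_{(1)}n_{(2)}}{(p+p_{(1)})p_{(2)}}\bigr)\frac{n_{(1)}}{v_{(5)}}$ with the $F_B$ coefficient $n_{(1)}(r_B-1)/v_{(5)}$. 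Then rewrite $\frac{p_{(1)}n_{(2)}}{p_{(2)}}=(1-r_A)p_{(1)}$ and use the common denominator $(p+p_{(1)})v_{(4)}v_{(5)}$. This yields $\gamma_4$.
\item $\sigma_{(2)}^2$: the combination is $\frac{n_{(2)}}{v_{(4)}}-\frac{n_{(2)}}{p+p_{(2)}-n_{(2)}-1}=\frac{p\,n_{(2)}}{v_{(4)}(p+p_{(2)}-n_{(2)}-1)}$. This gives $\gamma_5$.
\end{itemize}

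\textbf{The $\|q_{(2)}-q_{(1)}\|^2$ term and overall sign.} This term is the delicate point. Its net coefficient in $F_A-F_B$ is $(1-r_A)-(1-r_B)=r_B-r_A$. After dividing by $r(r_B-r_A)$ it gives a coefficient $1/r$, whereas the statement has $r$. I would recheck Lemma~\ref{4.3引理_A遗忘误差} and Lemma~\ref{4.5引理_B遗忘误差} for an implicit factor $r$ on that term (as in the $r[\cdots]$ grouping of Lemma~\ref{4.2引理_演化公式}). If no such factor is found, I would state the condition with the coefficient obtained by direct division. Since the statement's $\gamma$'s carry the normalization $r(r_B-r_A)$ in some terms but not others, I would also fix the normalization convention consistently and verify it on each term.

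I expect the main obstacle to be the signs: several differences, such as those inside $\gamma_3$ and $\gamma_4$, are not sign-definite. Because the argument is a chain of equivalences with only one division, by the positive quantity $r(r_B-r_A)$, sign-indefinite $\gamma_i$ cause no problem. The result is an exact iff condition valid for every $p_{(2)}>n_{(2)}$, given $v_{(4)}>0$ and $v_{(5)}>0$.
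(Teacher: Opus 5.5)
Your route is the same as the paper's. You subtract the closed forms of Lemma 4.3 and Lemma 4.5, collect coefficients, and divide once by the positive coefficient $r(r_B-r_A)$ of $\|q_{(1)}\|^2$. Your bookkeeping is correct and reproduces $\gamma_1,\dots,\gamma_5$ exactly as printed, so there is no normalization inconsistency to fix. $\gamma_1$ and $\gamma_2$ lack the factor $r$ only because the $\|u_{(1)}\|^2$ and $\|u_{(2)}\|^2$ coefficients, $r(r_B-1)$ and $r n_{(2)}/v_{(4)}$, already carry it. Your remark that only $r(r_B-r_A)>0$ matters is sharper than the paper's proof. That proof lists extra sign facts, for instance $n_{(2)}(1-r)/v_{(4)}<1-r_B$, which is neither needed nor true in general. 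Your chain of equivalences gives an exact iff.

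The issue you flagged should be settled rather than left as a recheck, and it goes against the statement. Neither lemma hides a factor $r$ on that term: $(1-r_A)\|q_{(2)}-q_{(1)}\|^2$ and $(1-r_B)\|q_{(2)}-q_{(1)}\|^2$ both sit outside the $r[\cdots]$ brackets. So the correct coefficient is $1/r$, and the paper's proof simply writes $r$ in its final display without derivation. Since $0<r<1$, the printed condition is strictly weaker than the exact one and does not imply $F_A<F_B$. Take $u_{(1)}=u_{(2)}=0$ (so $\delta=0$), $\sigma_{(1)}=\sigma_{(2)}=0$, and $\|q_{(2)}-q_{(1)}\|=\|q_{(1)}\|>0$. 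The printed condition then reads $r\|q_{(1)}\|^2<\|q_{(1)}\|^2$, which holds, whereas $F_B-F_A=(r_B-r_A)(r-1)\|q_{(1)}\|^2<0$.

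So the statement as written cannot be derived from the two lemmas; it contradicts them. What your argument correctly proves is the version with $\frac{1}{r}\|q_{(2)}-q_{(1)}\|^2$ in place of $r\|q_{(2)}-q_{(1)}\|^2$, under $v_{(4)}>0$ and $v_{(5)}>0$. Note that $v_{(4)}>0$ means $p_{(2)}>n_{(2)}+1$, slightly more than the stated $p_{(2)}>n_{(2)}$, but Lemma 4.3 already needs it for its formula to be finite. You should commit to this corrected conclusion rather than hedge.
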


        \textit{Proof.} 
        Comparing the average forgetting under under the two training modes, and examining the differences across the various terms, the following inequality is obtained
        \begin{equation}
        \begin{aligned}
        &r(r_{B}-1)<0 ; \quad r_{B}>r_{A} ; \quad \frac{n_{(2)}r}{v_{(4)}}>0 ;    \\
        & \frac{n_{(2)}(1-r)}{v_{(4)}}<(1-r_{B}) ;  \quad \frac{n_{(2)}\sigma_{(2)}^{2}}{p+v_{(4)}} < \frac{n_{(2)}\sigma_{(2)}^{2}}{v_{(4)}}.            \\
        \end{aligned}
        \end{equation}
    Then, it follows that $\mathbb{E}_{\tilde u,\tilde q}[Forgetting_{FRO}]$ is smaller than $\mathbb{E}_{\tilde u,\tilde q}[Forgetting_{TUN}]$ when the following inequality holds
        \begin{equation}
        \begin{aligned}
        &\frac{(1-r_{B})\|u_{(1)}\|^{2}}{r_{B}-r_{A}}
         + \frac{n_{(2)}\|u_{(2)}\|^{2}}{(r_{B}-r_{A})v_{(4)}} + r \|q_{(2)}-q_{(1)}\|^{2} \\
        &+ \frac{ [(1-r) n_{(2)}-(1-r_{B})v_{(4)}] \delta^{2}}{r(r_{B}-r_{A})v_{(4)}} \\
        &+ \frac{ \left[p n_{(2)}-p_{(1)}(1-r_{A})v_{(4)}\right] n_{(1)} \sigma_{(1)}^{2}}{r(r_{B}-r_{A})(p+p_{(1)})v_{(4)}v_{(5)}} \\
        &+ \frac{ \left[ (1-r_{B})(p+p_{(1)})v_{(4)} \right]  n_{(1)} \sigma_{(1)}^{2}}{r(r_{B}-r_{A})(p+p_{(1)})v_{(4)}v_{(5)}} \\
        &+ \frac{p n_{(2)}\sigma_{(2)}^{2}}{r(r_{B}-r_{A})(p+v_{(4)})v_{(4)}} < ||q_{(1)}||^{2}
        \end{aligned}
        \end{equation}  

    Theorem~\ref{4.6定理_FA<FB} demonstrates that when \cref{正文公式_FA小于FB的条件} holds, the frozen training outperforms initialized training in reducing forgetting, which requires compact parameter space. The following proposition further characterizes average generalization under frozen training beyond the forgetting.

    \begin{proposition}
    \label{4.7命题_A泛化误差}
    Under frozen training, for any $p_{(2)}>n_{(2)}$, the average generalization $\mathbb{E}_{\tilde u,\tilde q}[Generalization_{FRO}]$ across learned tasks can be expressed as follows below.
    \begin{align}
    &\mathbb{E}_{\tilde u,\tilde q}[Generalization_{FRO}]
    = \frac{1-r r_A}{2} \lVert q_{(2)}-q_{(1)}\rVert^2   \\
    &+ \frac{r}{2}\Bigl[ \lVert u_{(1)} \rVert^2 + v_{(3)}
      \lVert u_{(2)}\rVert^2 + r_A \lVert q_{(1)} \rVert^2 + r_A \lVert q_{(2)} \rVert^2 \Bigr]  \notag \\
    &+\frac{(1-r)v_{(3)} \delta^2 }{2} + \frac{n_{(1)}v_{(1)}\sigma_{(1)}^2}{p+p_{(1)}-n_{(1)}-1}  + \frac{n_{(2)}\sigma_{(2)}^2}{p_{(2)}-n_{(2)}-1}  \notag
    \end{align}
    \end{proposition}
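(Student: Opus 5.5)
The plan is to read $\mathbb{E}_{\tilde u,\tilde q}[Generalization_{FRO}]$ as the average, over the two learned tasks, of the expected squared parameter error. That is, I would write
\[
\mathbb{E}[Generalization_{FRO}]=\tfrac12\Bigl(\mathbb{E}\bigl[\|\tilde u_{(2)}-u_{(1)}\|^2+\|\tilde q_{(2)}-q_{(1)}\|^2\bigr]+\mathbb{E}\bigl[\|\tilde u_{(2)}-u_{(2)}\|^2+\|\tilde q_{(2)}-q_{(2)}\|^2\bigr]\Bigr).
\]
The squared loss on isotropic Gaussian test features reduces to exactly this quantity (up to the noise floor, which the paper's convention drops). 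The first bracket is already supplied by Lemma~\ref{4.2引理_演化公式}. So the real work is computing the second bracket, i.e. the error of the final model against task~2's own true parameters, and then averaging.

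For the second bracket I would redo the two-stage computation underlying Lemma~\ref{4.2引理_演化公式}.

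\textbf{Stage 1.} Initialized (min-norm) training on task~1 with $p+p_{(1)}>n_{(1)}$ gives $\tilde u_{(1)},\tilde q_{(1)}$. Their error is a projection onto the complement of the row space of $[X_{(1)};Z_{(1)}]$. This yields the factor $r$ on the true parameter norms, plus a noise term $\frac{n_{(1)}\sigma_{(1)}^2}{p+p_{(1)}-n_{(1)}-1}$ from the inverse-Wishart mean.

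\textbf{Stage 2.} Under frozen training, $\tilde u_{(2)}=\tilde u_{(1)}$, so
\[
\|\tilde u_{(2)}-u_{(2)}\|^2=\|\tilde u_{(1)}-u_{(1)}\|^2+\|u_{(1)}-u_{(2)}\|^2+\text{cross term}.
\]
The $\delta^2$ comes in here. The $q$-update is $\tilde q_{(2)}=\tilde q_{(1)}+Z_{(2)}(Z_{(2)}^\top Z_{(2)})^{-1}\bigl(X_{(2)}^\top(u_{(2)}-\tilde u_{(1)})+Z_{(2)}^\top(q_{(2)}-\tilde q_{(1)})+\varepsilon_{(2)}\bigr)$. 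Its error splits into two pieces. The first is a projection residual of $q_{(2)}-\tilde q_{(1)}$ onto the null space of $Z_{(2)}^\top$, with mean factor $r_A$. The second is a ``leakage'' term from the frozen shared-part mismatch $X_{(2)}^\top(u_{(2)}-\tilde u_{(1)})$ and the noise, which acts as effective noise. By $\mathbb{E}\,\mathrm{tr}(Z_{(2)}^\top Z_{(2)})^{-1}=\frac{n_{(2)}}{p_{(2)}-n_{(2)}-1}$ it produces the $\frac{n_{(2)}}{p_{(2)}-n_{(2)}-1}$ coefficients.

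The frozen-shared error is counted once directly and once through this leakage. That double counting is the source of $v_{(3)}=1+\frac{2n_{(2)}}{p_{(2)}-n_{(2)}-1}$ multiplying $\|u_{(2)}\|^2$ and $\delta^2$ once the two tasks are averaged. I would keep the bookkeeping exactly parallel to the appendix proof of Lemma~\ref{4.2引理_演化公式}: take conditional expectations first over $\varepsilon_{(2)}$, then over $Z_{(2)}$ and $X_{(2)}$ given the stage-1 output, and finally over stage~1. Cross terms vanish by independence and zero mean of features and noise.

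Finally I would add the two brackets, divide by two, and collect coefficients of $\|u_{(1)}\|^2$, $\|u_{(2)}\|^2$, $\|q_{(1)}\|^2$, $\|q_{(2)}\|^2$, $\|q_{(2)}-q_{(1)}\|^2$, $\delta^2$, $\sigma_{(1)}^2$ and $\sigma_{(2)}^2$. For the $q$-terms I would expand $\|q_{(2)}-r\,q_{(1)}\|^2$-type expressions so that everything is rewritten in the basis $\{\|q_{(1)}\|^2,\|q_{(2)}\|^2,\|q_{(2)}-q_{(1)}\|^2\}$. Via the polarization identity, this should give $\frac{r r_A}{2}$ on each of $\|q_{(1)}\|^2$ and $\|q_{(2)}\|^2$, and $\frac{1-rr_A}{2}$ on the difference. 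Likewise $\frac{r}{2}\|u_{(1)}\|^2$ comes from stage-1 shrinkage, and the $\sigma$ terms reproduce the $v_{(1)}$ combination from Lemma~\ref{4.2引理_演化公式}.

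I expect the main obstacle to be this final coefficient matching. In particular, the inner-product cross terms $\langle q_{(1)},q_{(2)}\rangle$ and $\langle u_{(1)},u_{(2)}\rangle$ must be absorbed consistently into the $\|q_{(2)}-q_{(1)}\|^2$ and $\delta^2$ terms. The approximation $\|u_{(2)}-u_{(1)}\|=\delta$ has to be used in exactly the way it was in Lemma~\ref{4.2引理_演化公式}, rather than re-derived. I would first verify the coefficients in the noiseless, $\delta=0$ case, and then add the noise and $\delta$ terms.
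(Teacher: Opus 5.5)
Your plan is correct and follows the same route as the paper. You average the task-1 deviation of Lemma 4.2 with the task-2 deviation, which by the same two-stage decomposition equals $\bigl(1+\frac{n_{(2)}}{p_{(2)}-n_{(2)}-1}\bigr)\bigl(r\|u_{(2)}\|^2+(1-r)\delta^2\bigr)+r_A\bigl(r\|q_{(2)}\|^2+(1-r)\|q_{(2)}-q_{(1)}\|^2\bigr)+\frac{n_{(2)}\sigma_{(2)}^2}{p_{(2)}-n_{(2)}-1}$, plus a $\sigma_{(1)}^2$ term whose coefficient again reduces to $\frac{n_{(1)}v_{(1)}}{p+p_{(1)}-n_{(1)}-1}$; halving the sum reproduces the statement exactly.

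One small correction: the stage-1 cross terms do not vanish, because the stage-1 estimate has mean $(1-r)$ times the true parameter, not $r$ times it. These surviving terms, through $\|u_{(2)}-(1-r)u_{(1)}\|^2+r(1-r)\|u_{(1)}\|^2=r\|u_{(2)}\|^2+(1-r)\delta^2$ and its $q$-analogue, are precisely what yield the polarized coefficients you anticipate.
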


    The detailed proof is provided in Appendix A.6. The Proposition~\ref{4.7命题_A泛化误差} shows that the coefficients of $\lVert u_{(1)}\rVert^2$, $\lVert u_{(2)}\rVert^2$, $\lVert q_{(1)} \rVert^2$ and $\lVert q_{(2)} \rVert^2$ are distinct, indicating that parameter components play varying roles under frozen training. The following corollary compares frozen training and initialized training under average generalization.

    \begin{remark}
    The average generalization under initialized training, $\mathbb{E}_{\tilde u,\tilde q}[Generalization_{TUN}]$, along with the complete results for $p_{(2)}< n_{(2)}$, are detailed in Appendix A.6.
    \end{remark}

    \begin{corollary}
    \label{4.9推论_GA<GB}
       For frozen training and initialized training, the expected generalization results satisfies satisfies the inequality $\mathbb{E}_{\tilde u,\tilde q}[Generalization_{FRO}]<\mathbb{E}_{\tilde u,\tilde q}[Generalization_{TUN}]$ under the following conditions
        \begin{equation}
        \begin{aligned}
        &\frac{\beta_{1}\cdot||u_{(1)}||^{2}+\beta_{2}\cdot||u_{(2)}||^{2} +||q_{(2)}-q_{(1)}||^{2}  }{||q_{(1)}||^{2}+||q_{(2)}||^{2}} \\
        & + \frac{\beta_{3}\cdot\delta^{2}+\beta_{4}\cdot \sigma_{(1)}^{2}+\beta_{5}\cdot \sigma_{(2)}^{2}}{||q_{(1)}||^{2}+||q_{(2)}||^{2}}<1
        \end{aligned}
        \label{推论4.9不等式}
        \end{equation}
    where $\beta_{1},\beta_{2},\beta_{3},\beta_{4},\beta_{5}$ are defined as follows:
        \begin{align}
        & \beta_{1}=\frac{1-r_{B}}{r_{B}-r_{A}};            \\
        &  \beta_{2}=\frac{p_{(2)}+n_{(2)}-1-v_{(4)}r_{B}}{\left(r_{B}-r_{A}\right)v_{(4)}};           \\
        &   \beta_{3}=\frac{(1-r)v_{(6)}-(1-r r_{B})v_{(6)}}{r(r_{B}-r_{A})v_{(4)}};          \\
        &   \beta_{4}= \frac{2 \left[p n_{(2)}-(1-r_{A})p_{(1)}v_{(4)} \right]n_{(1)}}{r(r_{B}-r_{A})(p+p_{(1)})v_{(4)}v_{(5)}}  \\
        & \qquad + \frac{2 \left[(1-r_{B})(p+p_{(1)})v_{(4)} \right]n_{(1)}}{r(r_{B}-r_{A})(p+p_{(1)})v_{(4)}v_{(5)}}   ;      \notag    \\
        & \beta_{5}=\frac{2p n_{(2)}}{r(r_{B}-r_{A})(p+p_{(2)}-n_{(2)}-1)v_{(4)}} .   
        \end{align}
    \end{corollary}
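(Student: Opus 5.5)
The plan is to copy the proof of Theorem~\ref{4.6定理_FA<FB}, applied to generalization instead of forgetting. I take Proposition~\ref{4.7命题_A泛化误差} for $\mathbb{E}_{\tilde u,\tilde q}[Generalization_{FRO}]$ and the initialized-training counterpart $\mathbb{E}_{\tilde u,\tilde q}[Generalization_{TUN}]$ from Appendix A.6. I expect the latter to have the same structure as Lemma~\ref{4.5引理_B遗忘误差}: an average over the two tasks of the deviations, with $r_A$ replaced by $r_B$ and with $\tfrac{n_{(2)}}{p+p_{(2)}-n_{(2)}-1}$ weighting the noise $\sigma_{(2)}^2$. I then form the difference $\Delta = \mathbb{E}[Generalization_{TUN}] - \mathbb{E}[Generalization_{FRO}]$ and group it by the six quantities $\|u_{(1)}\|^2$, $\|u_{(2)}\|^2$, $\|q_{(2)}-q_{(1)}\|^2$, $\delta^2$, $\sigma_{(1)}^2$, $\sigma_{(2)}^2$, together with the combined term $\|q_{(1)}\|^2+\|q_{(2)}\|^2$.

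The key observation is about the $\|q_{(1)}\|^2+\|q_{(2)}\|^2$ terms. In both expressions this pair carries the same structural coefficient, namely $\tfrac r2 r_A$ for frozen training and $\tfrac r2 r_B$ for initialized training. So in $\Delta$ their net coefficient is $\tfrac r2(r_B-r_A)$. This is strictly positive, using the inequality $r_B>r_A$ already recorded in the proof of Theorem~\ref{4.6定理_FA<FB}, which holds since $p+p_{(2)}>p_{(2)}$.

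All remaining coefficients are collected on the other side, giving
\[
\Delta>0 \iff \tfrac r2(r_B-r_A)\bigl(\|q_{(1)}\|^2+\|q_{(2)}\|^2\bigr) > \sum_i c_i\, T_i ,
\]
where the $T_i$ range over the six quantities above. Dividing by $\tfrac r2(r_B-r_A)$ and then by $\|q_{(1)}\|^2+\|q_{(2)}\|^2$ yields the stated form, with $\beta_i = 2c_i/(r(r_B-r_A))$. The factor $2$ in $\beta_4,\beta_5$ and the denominators $v_{(4)}$ and $v_{(5)}$ come from the noise terms: these are not halved in Proposition~\ref{4.7命题_A泛化误差}, while the $\tfrac12$ averaging is absorbed into the normalization.

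I then match each coefficient. For $\|u_{(1)}\|^2$, the difference $\tfrac r2(1-r_B)$ gives $\beta_1=\tfrac{1-r_B}{r_B-r_A}$. For $\|u_{(2)}\|^2$, the frozen coefficient is $\tfrac r2 v_{(3)}=\tfrac r2\,\tfrac{v_{(6)}}{v_{(4)}}$, and subtracting the tuned coefficient gives $\beta_2$. For $\delta^2$, combining $\tfrac{(1-r)v_{(3)}}2$ with the tuned $\delta^2$ term gives $\beta_3$. For the $\|q_{(2)}-q_{(1)}\|^2$ coefficient, the frozen value $\tfrac{1-rr_A}2$ and the tuned value $\tfrac{1-rr_B}2$ differ by $\tfrac r2(r_B-r_A)$, so after normalization the coefficient is exactly $1$, as in the statement. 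For $\sigma_{(1)}^2$, I expand $v_{(1)}$ and reuse the algebra from $\gamma_4$ of Theorem~\ref{4.6定理_FA<FB}, doubled; the same applies to $\sigma_{(2)}^2$ with $\gamma_5$.

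The main obstacle is obtaining $\mathbb{E}[Generalization_{TUN}]$ with correct coefficients: the appendix statement must be used exactly, and the noise and $\delta^2$ terms have the messiest cross-task contributions. I would derive it by averaging the task-1 forgetting of Lemma~\ref{4.5引理_B遗忘误差} with the task-2 test error of the min-norm interpolator, which is $r_B(\|u_{(2)}\|^2+\|q_{(2)}\|^2)$ plus noise. After that, the remaining work is bookkeeping of signs so that every term not involving $\|q_{(i)}\|^2$ ends up on the left-hand side.
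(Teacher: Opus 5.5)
Your skeleton is the same as the paper's. Its proof also compares Proposition~\ref{4.7命题_A泛化误差} with the initialized-training formula of Appendix A.6 coefficient by coefficient, then divides by the net weight $\tfrac r2(r_B-r_A)$ of $\|q_{(1)}\|^2+\|q_{(2)}\|^2$. For that step, $r_B>r_A$, which you invoke correctly, and $\|q_{(1)}\|^2+\|q_{(2)}\|^2>0$ are all that is needed.

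The structure you \emph{expect} for $\mathbb{E}[Generalization_{TUN}]$ is right. But the recipe you give for deriving it, which is the only nontrivial input, fails in two places. Write $\theta^*_{(k)}=(u_{(k)},q_{(k)})$ and $\tilde\theta_{(k)}=(\tilde u_{(k)},\tilde q_{(k)})$.

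First, generalization averages the task-1 \emph{loss} $\mathbb{E}\|\tilde\theta_{(2)}-\theta^*_{(1)}\|^2$, not the forgetting of Lemma~\ref{4.5引理_B遗忘误差}. The forgetting has already subtracted $r(\|u_{(1)}\|^2+\|q_{(1)}\|^2)+\frac{n_{(1)}\sigma_{(1)}^2}{v_{(5)}}$, so averaging it puts the negative weight $\tfrac r2(r_B-1)$ on $\|q_{(1)}\|^2$.

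Second, under initialized training the task-2 interpolation starts at $\tilde\theta_{(1)}$, not at $0$. Its error is therefore $r_B\,\mathbb{E}\|\tilde\theta_{(1)}-\theta^*_{(2)}\|^2+\frac{n_{(2)}\sigma_{(2)}^2}{p+p_{(2)}-n_{(2)}-1}$, where $\mathbb{E}\|\tilde\theta_{(1)}-\theta^*_{(2)}\|^2=r\|\theta^*_{(2)}\|^2+(1-r)\|\theta^*_{(2)}-\theta^*_{(1)}\|^2+\frac{n_{(1)}\sigma_{(1)}^2}{v_{(5)}}$. Your $r_B(\|u_{(2)}\|^2+\|q_{(2)}\|^2)$ drops the factor $r$, the cross-task term and the propagated $\sigma_{(1)}$-noise.

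With your recipe the weights on $\|q_{(1)}\|^2$ and $\|q_{(2)}\|^2$ differ, so the common factor you divide by does not exist. The correct average is
\begin{align*}
\mathbb{E}[Generalization_{TUN}]&=\tfrac{r r_B}{2}\bigl(\|u_{(1)}\|^2+\|u_{(2)}\|^2+\|q_{(1)}\|^2+\|q_{(2)}\|^2\bigr)\\
&\quad+\tfrac{1-r r_B}{2}\bigl(\|q_{(2)}-q_{(1)}\|^2+\delta^2\bigr)+\frac{r_B n_{(1)}\sigma_{(1)}^2}{v_{(5)}}+\frac{n_{(2)}\sigma_{(2)}^2}{p+p_{(2)}-n_{(2)}-1}.
\end{align*}

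With this formula, your matchings for $\beta_1$, $\beta_2$, the coefficient $1$ on $\|q_{(2)}-q_{(1)}\|^2$, $\beta_4=2\gamma_4$ and $\beta_5=2\gamma_5$ all go through. The $\delta^2$ term does not. The difference $\tfrac12[(1-r)v_{(3)}-(1-rr_B)]$ gives
\[
\beta_3=\frac{(1-r)v_{(6)}-(1-rr_B)\,v_{(4)}}{r(r_B-r_A)v_{(4)}},
\]
whereas the printed $\beta_3$ has $v_{(6)}$ in the second term. The paper's own comparison $(1-r)v_{(3)}>1-rr_B$ in its proof presupposes the tuned weight $\tfrac{1-rr_B}{2}$, so the printed form is a slip. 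Since $v_{(6)}>v_{(4)}$, the printed $\beta_3$ is smaller by $\frac{2n_{(2)}(1-rr_B)}{r(r_B-r_A)v_{(4)}}$, and the printed condition is then not sufficient.

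For a counterexample, take $u_{(1)}=0$, $q_{(1)}=q_{(2)}$ and $\sigma_{(1)}=\sigma_{(2)}=0$, so $\delta=\|u_{(2)}\|$. The numerator with the correct $\beta_3$ is $\frac{(v_{(3)}-1)\|u_{(2)}\|^2}{r(r_B-r_A)}>0$, and the printed numerator is strictly smaller. Choose $\|q_{(1)}\|^2+\|q_{(2)}\|^2$ strictly between the larger of $0$ and the printed numerator, and the correct numerator. Then the printed inequality holds while $\mathbb{E}[Generalization_{FRO}]>\mathbb{E}[Generalization_{TUN}]$.

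So your bookkeeping cannot literally reproduce the printed $\beta_3$. What it proves is the corollary with the corrected $\beta_3$, and that is the version you should state.
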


        \textit{Proof.}     
        Comparing the expected generalization across the various coefficients yields the following inequality.
        \begin{align}
        & r\cdot r_{B}<r; \quad r_{B}<(1+\frac{2 n_{(2)}}{p_{(2)}-n_{(2)}-1})  ; \\
        &(1-r)(1+\frac{2n_{(2)}}{p_{(2)}-n_{(2)}-1})>(1-r\cdot r_{B}) ;    \\
        & (1+\frac{p n_{(2)}}{(p+p_{(1)})v_{(4)}}-\frac{p_{(1)}n_{(2)}}{(p+p_{(1)})p_{(2)}}) > r_{B}.  
        \end{align}
        Then, it follows that $\mathbb{E}_{\tilde u,\tilde q}[Generalization_{FRO}]$ is smaller than $\mathbb{E}_{\tilde u,\tilde q}[Generalization_{TUN}]$ provided that
        \begin{equation}
        \begin{aligned}
        &\frac{(1-r_{B})\left\|u_{(1)}\right\|^{2}}{(r_{B}-r_{A})}
        +\frac{ (v_{(6)}-v_{(4)}r_{B})\left\|u_{(2)}\right\|^{2} }{\left(r_{B}-r_{A}\right)v_{(4)}}  \\
        & +||q_{(2)}-q_{(1)}||^{2} + \frac{  r(r_{B}-1) ) v_{(6)}\delta^{2} }{r(r_{B}-r_{A})v_{(4)}}  \\
        & + \frac{2 \left[p n_{(2)}-(1-r_{A})p_{(1)}v_{(4)}\right] n_{(1)}\sigma_{(1)}^2}{r(r_{B}-r_{A})(p+p_{(1)})v_{(4)}v_{(5)}}   \\
        & + \frac{2\left[(1-r_{B})(p+p_{(1)})v_{(4)} \right]n_{(1)}\sigma_{(1)}^2}{r(r_{B}-r_{A})(p+p_{(1)})v_{(4)}v_{(5)}}   \\
        & + \frac{2p n_{(2)} \sigma_{(2)}^{2}}{r(r_{B}-r_{A})(p+p_{(2)}-n_{(2)}-1)v_{(4)}} \\
        & < \|q_{(1)}\|^{2}+\|q_{(2)}\|^{2}
        \end{aligned}
        \end{equation}

    Corollary~\ref{4.9推论_GA<GB} shows that when~\cref{推论4.9不等式} holds, frozen training achieves better average generalization than initialized training, which implies that parameter spaces are close. Therefore, although initialized training introduces additional parameters to accommodate tasks, learning is dominated by general knowledge, making frozen training more effective.

    \begin{table*}[htbp]
        \centering
        \normalsize     %%%%%%%
        \caption{Average accuracy (Avg.Acc) and forgetting rate (Forgetting) on four datasets and their variants. ‘Improvement’ indicates the gain of adaptive training over initialized training. The results are averaged over three runs (± standard deviation).}
        \resizebox{\textwidth}{!}{%
            \begin{tabular}{lcccccc}
                \specialrule{1.5pt}{0.3em}{3pt}
                \textbf{Dataset} 
                & \multicolumn{2}{c}{\textbf{Split CIFAR-100}} 
                & \multicolumn{2}{c}{\textbf{Split CUB-200}} 
                & \multicolumn{2}{c}{\textbf{Split CIFAR-10}} \\
                \cmidrule(lr){2-3} \cmidrule(lr){4-5} \cmidrule(lr){6-7}
                & Avg.Acc (\%) & Forgetting (\%) 
                & Avg.Acc (\%) & Forgetting (\%) 
                & Avg.Acc (\%) & Forgetting (\%) \\
                \midrule
                \textit{Initialized Training} 
                & 42.27 $\pm$ 0.09 & 24.84 $\pm$ 0.31  
                & 37.98 $\pm$ 0.14 & 21.28 $\pm$ 0.50 
                & 75.74 $\pm$ 0.32 & 23.30 $\pm$  0.2 \\
               \textit{Frozen Training} 
                & 43.38 $\pm$  0.11 & 23.96 $\pm$ 0.38
                & 37.15 $\pm$ 0.12 & 21.18 $\pm$ 0.20 
                & 75.98 $\pm$  0.06 & 21.10 $\pm$ 0.72 \\
                \textit{Adaptive Training}
                & \textbf{47.19 $\pm$ 0.79 } & \textbf{24.15 $\pm$ 0.27} 
                & \textbf{39.01 $\pm$ 0.42 } & \textbf{20.76 $\pm$ 0.13 } 
                & \textbf{80.55 $\pm$ 0.58 } & \textbf{19.51 $\pm$ 0.39 } \\
                \midrule
                Improvement
                & +4.92  & -0.69   
                & +1.03   & -0.52  
                & +4.81   & -3.79   \\
                \specialrule{1pt}{0.1em}{5pt}
                \specialrule{1pt}{0.1em}{3pt}
                \textbf{Dataset} 
                & \multicolumn{2}{c}{\textbf{Correlated Split CIFAR-100}} 
                & \multicolumn{2}{c}{\textbf{Corrupted Split CUB-200}} 
                & \multicolumn{2}{c}{\textbf{Split Permuted MNIST}} \\
                \midrule
                \textit{Initialized Training} 
                & 43.78 $\pm$ 0.16 & 22.45 $\pm$ 0.28 
                & 37.17 $\pm$ 0.55 & 23.21 $\pm$ 0.10 
                & 75.97 $\pm$ 0.43 & 30.76 $\pm$ 0.65 \\
               \textit{Frozen Training} 
                & 44.23 $\pm$ 0.23 & 20.84 $\pm$ 0.37 
                & 37.24 $\pm$ 0.21 & 23.96 $\pm$ 0.13 
                & 76.78 $\pm$ 0.05 & 29.21 $\pm$ 0.03 \\
                \textit{Adaptive Training}
                & \textbf{50.21 $\pm$ 0.17 } & \textbf{20.03 $\pm$ 0.12 } 
                & \textbf{ 41.26 $\pm$ 0.29 } & \textbf{ 22.76 $\pm$ 0.12 } 
                & \textbf{77.99 $\pm$ 0.61} & \textbf{28.05 $\pm$ 0.15 } \\
                \midrule
                Improvement
                & +6.43  & -2.42   
                & +4.09   & -0.45  
                & +2.02   & -2.71   \\
                \specialrule{1.5pt}{0.3em}{0pt}
            \end{tabular}%
        }
        \label{表格_六个数据集}
    \end{table*}

%%%%%%%%%%%%%%%%%%%%%%%%%%%%%%%%%%%%%%%%%%%%%%%%%%%%%%%%%
\section{Implications on Practical CL}

    In this section, motivated by the theoretical analysis presented above, we propose a hybrid adaptive update framework. The framework adaptively adjusts the training strategy based on the gradient direction from the previous task. We then validate its effectiveness through experiments with DNNs on commonly used real-world datasets. The datasets and evaluation metrics are introduced first, followed by a description of the network architecture and training details. Finally, we present and analyze the experimental results.

\subsection{Hybrid Algorithm Framework}

    As highlighted in the theoretical analysis, when tasks are closely located in the parameter space, frozen training gains more from continual learning than initialized training. This observation motivates a potential approach to improve performance: combining both strategies to enable adaptive model updates. Inspired by this, we propose a novel hybrid update framework that dynamically switches between frozen training and initialized training modes based on gradient directions in the parameter space when learning new tasks. The detailed procedure is presented in Algorithm 1.

    %%%%%%
    \begin{algorithm}[H]
      \caption{Adaptive Model Update Framework}
      \label{算法伪代码}
      \begin{algorithmic}
        \STATE \textbf{Input:} $D_1, D_2,\ldots, D_M$, $K$, $h$, $\tau$
        \STATE \textbf{Output:} $\theta_1,\theta_2,\dots,\theta_M$
        \FOR{$t = 1$ \textbf{to} $M$}
          \IF{$t = 1$}
            \STATE $\boldsymbol{\theta}_1 \leftarrow 
            \arg\min_{\boldsymbol{\theta}} \mathcal{L}(D_1, \boldsymbol{\theta})$
            \STATE Sample $x_1 \sim \mathcal{D}_1$
            \STATE $g_1 \leftarrow \nabla_{\theta} \mathcal{L}(x_1; \theta_1)$
          \ELSE
            \STATE Load $\theta_{t-1}$, Sample $x_t \sim \mathcal{D}_t$
            \STATE $g_t \leftarrow \nabla_{\theta} \mathcal{L}(x_t; \theta_{t-1})$
            \STATE $s_t \leftarrow \text{Metric Function}(g_t,\; g_{t-1})$
            \IF{$s_t > \tau$}
              \STATE $h \leftarrow \min(h + b, K)$
            \ELSE
              \STATE $h \leftarrow \max(h - b, 0)$
            \ENDIF
            \STATE $\boldsymbol{\theta}_t \leftarrow 
            \arg\min_{\boldsymbol{\theta}} \mathcal{L}(D_t, \boldsymbol{\theta})$
          \ENDIF
        \ENDFOR
      \end{algorithmic}
    \end{algorithm}

    In~\cref{算法伪代码}, when task is introduced, the relationship between the current task and previous tasks is first quantified by examining the consistency of gradient directions with respect to prior tasks. Specifically, the metric $s_t$ is computed, where $g_t$ denotes the parameter gradient under the training loss, and $Metric Function(g_t,\; g_{t-1})$ represents the metric function, which defaults to cosine similarity. Tasks with scores below threshold are considered unrelated and are assigned more modules to enhance task-specific learning.

\subsection{Datasets and Evaluation}

    We evaluate the performance of the proposed adaptive algorithm framework on four standard continual learning benchmarks and compared it with commonly used initialized training and frozen training. Specifically, (1) Permuted MNIST consists of 10 classes of 28×28 grayscale handwritten digit images, with 60,000 training samples and 10,000 test samples; (2) CIFAR-10 contains 10 classes of 32×32 color images, comprising 50,000 training samples and 10,000 test samples; (3) CIFAR-100 includes 100 classes of natural 32×32 color images, also with 50,000 training samples and 10,000 test samples; and the (4) CUB-200 is a fine-grained bird classification dataset with 200 distinct categories, consisting of 5,994 training samples and 5,974 test samples.

    The model is based on the ResNet architecture, with the network structure provided in the appendix. Specifically, the network consists of nine blocks: the first block has a single convolutional layer, while each of the remaining blocks contains two convolutional layers with residual connections. The network concludes with an average pooling layer followed by a fully connected layer. All reported results are averaged over three runs with different random seeds.

    After training on the final task, we evaluate the model by measuring the average accuracy, average current task accuracy, and forgetting rate. Let $a_{m,k}$ denote the test accuracy on task $k$ after training on task $m$. The average accuracy is defined as $Avg.Acc=\frac{1}{M}\sum_{k=1}^Ma_{M,k}$, the average current task accuracy is defined as $Task.Acc=\frac{1}{M}\sum_{k=1}^Ma_{k,k}$, and the forgetting rate is defined as $Avg.Forgetting=\frac{1}{M-1}\sum_{k=1}^{M-1} ( a_{k,k} - a_{M,k} )$. A higher average accuracy reflects stronger overall performance, a higher average current task accuracy indicates more effective learning of upcoming tasks, and a lower forgetting rate shows that the model is less susceptible to forgetting previously learned knowledge.

\subsection{Experimental Analysis}

    As shown in \cref{表格_六个数据集}, the Adaptive Training consistently outperforms Initialized Training across all four datasets in terms of both average accuracy and forgetting rate. On the original datasets, the most pronounced improvement is observed on Split CIFAR-10, where Adaptive Training yields a 4.81\% increase in average accuracy and a 3.79\% reduction in forgetting rate. We further report the average accuracy achieved when learning each task in \cref{表格_当前准确率}. The results show that Adaptive Training consistently improves current-task accuracy across all four datasets, with the largest gain of 6.58\% on Split CIFAR-100. These findings indicate that Adaptive Training not only enhances learning of incoming tasks but also effectively mitigates forgetting.

    Furthermore, according to our theoretical analysis, the advantage of Frozen Training over Initialized Training is related to the parameter-space distance between tasks. To validate this, following recent work \cite{theoryG_forgettingandgeneralization,shujuji}, we construct Correlated Split CIFAR-100 and Corrupted Split CUB-200. Specifically, we amplify task similarity in Split CIFAR-100 by introducing shared categories across tasks, and enhance task dissimilarity in Split CUB-200 through image corruption. The detailed data construction procedure is provided in Appendix B.

    As shown in the second row of \cref{表格_六个数据集}, Frozen Training demonstrates a stronger ability to resist forgetting than Initialized Training on Correlated Split CIFAR-100. On Corrupted Split CUB-200, however, the advantage of Frozen Training gradually diminishes as task dissimilarity increases. Adaptive Training more effectively integrates the strengths of both approaches, achieving a 6.43\% and 4.09\% improvement in average accuracy, along with a 2.42\% and 0.45\% reduction in forgetting rate on Correlated Split CIFAR-100 and Corrupted Split CUB-200, respectively. In summary, these results highlight the substantial potential of Adaptive Training for enhancing performance in continual learning.

    %%%
    % \vspace{-2em}
    \begin{table}[H]
        \centering
        \normalsize     %%%%%%%
        \caption{Average current task accuracy (Avg.Cur.Acc) evaluated across four datasets over three random seeds.}
        \begin{subtable}[t]{\linewidth}
            \centering
            \begin{tabular}{l@{\hskip 0.1em}c@{\hskip 0.5em}c}
                \specialrule{1.5pt}{0pt}{0.3em}
                \textbf{Dataset} & Split CIFAR-10 & Split PMNIST \\
                \midrule
                \textit{Initialized Training} 
                    & $91.83  \,{\scriptstyle \pm 0.31 }$ 
                    & $97.82  \,{\scriptstyle \pm 0.01 }$\\
                \textit{Frozen Training} 
                    & $90.44  \,{\scriptstyle \pm 0.17 }$ 
                    & $97.53  \,{\scriptstyle \pm 0.04 }$ \\
                \textit{Adaptive Training} 
                    & $94.09  \,{\scriptstyle \pm 0.10 }$ 
                    & $98.06  \,{\scriptstyle \pm 0.03 }$ \\
                \midrule
                \textbf{Improvement}   
                    & $+2.26$ 
                    & $+0.24$ \\
                \specialrule{1.5pt}{0.3em}{2.5pt}
            \end{tabular}
            \caption{Avg.Cur.Acc on Split CIFAR-10 and Split PMNIST}
            \label{tab:accuracy_subtable1}
        \end{subtable}
        \vspace{0.5em} %
        \begin{subtable}[t]{\linewidth}
            \centering
            \begin{tabular}{l@{\hskip 0.1em}c@{\hskip 0.5em}c}
                \specialrule{1.5pt}{0pt}{0.3em}
                \textbf{Dataset} & Split CIFAR-100 & Split CUB-200 \\
                \midrule
                \textit{Initialized Training} 
                    & $60.72\,{\scriptstyle \pm 0.05 }$ 
                    & $57.32\,{\scriptstyle \pm 0.36 }$\\
                \textit{Frozen Training} 
                    & $57.37\,{\scriptstyle \pm 0.13 }$ 
                    & $56.27\,{\scriptstyle \pm 0.22 }$ \\
                \textit{Adaptive Training} 
                    & $67.3\,{\scriptstyle \pm 0.74 }$ 
                    & $60.69\,{\scriptstyle \pm 0.46 }$ \\
                \midrule
                \textbf{Improvement}   
                    & $+6.58$ 
                    & $+3.37$ \\
                \specialrule{1.5pt}{0.3em}{2.5pt}
            \end{tabular}
             \caption{Avg.Cur.Acc on Split CIFAR-100 and Split CUB-200}
            \label{tab:accuracy_subtable2}
        \end{subtable}
        \label{表格_当前准确率}
    \end{table}

    To further demonstrate the effectiveness of Adaptive Training, we evaluate more challenging settings with long task sequences on Split CIFAR-100 and Split CUB-200, each consisting of 20 tasks. In \cref{表格_20个任务}, Adaptive Training mitigates forgetting more effectively than Initialized Training, while achieving better performance on individual tasks than Frozen Training, thereby combining the strengths of both approaches.

    \begin{table}[H]
        \centering
        \normalsize     %%%%%%%
        \caption{Results on Split CIFAR-100 and Split CUB-200, with each dataset consisting of a sequence of 20 training tasks.}
        % ---------- Subtable 1: Split CIFAR-100 ----------
        \begin{subtable}[t]{\linewidth}
            \centering
            \begin{tabular}{l c c c}
                \specialrule{1.5pt}{0pt}{0.3em}
                \textbf{Method} 
                & \textbf{Task.Acc} 
                & \textbf{Avg.Acc} 
                & \textbf{Forgetting} \\
                \midrule
                \textit{Initialized Training} 
                & $79.36$ 
                & $37.79$ 
                & $46.68$ \\
                \textit{Frozen Training}      
                & $78.28$ 
                & $37.59$ 
                & $45.88$ \\
                \textit{Adaptive Training}    
                & $80.81$ 
                & $39.09$ 
                & $45.08$ \\
                \midrule
                \textbf{Improvement}          
                & $+1.45$ 
                & $+1.3$ 
                & $-1.6$ \\
                \specialrule{1.5pt}{0.3em}{2.5pt}
            \end{tabular}
            \caption{Split CIFAR-100 ( 20 Tasks )}
        \end{subtable}
        \vspace{0.8em}
        % ---------- Subtable 2: CUB-200 ----------
        \begin{subtable}[t]{\linewidth}
            \centering
            \begin{tabular}{l c c c}
                \specialrule{1.5pt}{0pt}{0.3em}
                \textbf{Method} 
                & \textbf{Task.Acc} 
                & \textbf{Avg.Acc} 
                & \textbf{Forgetting} \\
                \midrule
                \textit{Initialized Training} 
                & $71.55$ 
                & $34.37$ 
                & $40.45$ \\
                \textit{Frozen Training}      
                & $69.21$ 
                & $34.90$ 
                & $36.53$ \\
                \textit{Adaptive Training}    
                & $72.59$ 
                & $36.24$ 
                & $39.60$ \\
                \midrule
                \textbf{Improvement}          
                & $+1.04$ 
                & $+1.87$ 
                & $-0.85$ \\
                \specialrule{1.5pt}{0.3em}{2.5pt}
            \end{tabular}
             \caption{Split CUB-200 ( 20 Tasks ) }
        \end{subtable}
        \label{表格_20个任务}
    \end{table}

%%%%%%%%%%%%%%%%%%%%%%%%%%%%%%%%%%%%%%%%%%%%%%%%%%%%%%%%%
\section{Conclusion}

    In this work, we systematically investigate the impact of parameter update magnitude on continual learning from a theoretical perspective, and derive optimal parameter update conditions that minimize forgetting. Through a comparative analysis of frozen training and initialized training strategies, we show that when sequential tasks are close in parameter space, frozen training leads to reduced forgetting and improved generalization. Motivated by these findings, we propose a novel hybrid parameter update framework for practical continual learning, which adaptively adjusts the update magnitude according to the gradient direction in parameter space. Experiments on deep neural networks demonstrate that the proposed approach consistently outperforms standard training. This work establishes a theoretical foundation for understanding training behaviors from the perspective of parameter update, and offers valuable guidance for designing efficient and parameter-friendly continual learning algorithms.

% %%%%%%%%%%%%%%%%%%%%%%%%%%%%%%%%%%%%%%%%%%%%%%%%%%%%%%%%%
% \clearpage  
\bibliographystyle{IEEEtran}
\bibliography{cas-refs}

% Generated by IEEEtran.bst, version: 1.14 (2015/08/26)
\begin{thebibliography}{10}
\providecommand{\url}[1]{#1}
\csname url@samestyle\endcsname
\providecommand{\newblock}{\relax}
\providecommand{\bibinfo}[2]{#2}
\providecommand{\BIBentrySTDinterwordspacing}{\spaceskip=0pt\relax}
\providecommand{\BIBentryALTinterwordstretchfactor}{4}
\providecommand{\BIBentryALTinterwordspacing}{\spaceskip=\fontdimen2\font plus
\BIBentryALTinterwordstretchfactor\fontdimen3\font minus \fontdimen4\font\relax}
\providecommand{\BIBforeignlanguage}[2]{{%
\expandafter\ifx\csname l@#1\endcsname\relax
\typeout{** WARNING: IEEEtran.bst: No hyphenation pattern has been}%
\typeout{** loaded for the language `#1'. Using the pattern for}%
\typeout{** the default language instead.}%
\else
\language=\csname l@#1\endcsname
\fi
#2}}
\providecommand{\BIBdecl}{\relax}
\BIBdecl

\bibitem{Machine1_2015}
M.~I. Jordan and T.~Mitchell, ``Machine learning: Trends, perspectives, and prospects,'' \emph{Science}, vol. 349, pp. 255 -- 260, 2015.

\bibitem{Machine2_2021}
I.~H. Sarker, ``Machine learning: Algorithms, real-world applications and research directions,'' \emph{Sn Computer Science}, vol.~2, 2021.

\bibitem{Machine3_2023}
E.~Verwimp, S.~Ben-David, M.~Bethge, A.~Cossu, A.~Gepperth, T.~L. Hayes, E.~Hullermeier, C.~Kanan, D.~Kudithipudi, C.~H. Lampert, M.~Mundt, R.~Pascanu, A.~Popescu, A.~S. Tolias, J.~van~de Weijer, B.~Liu, V.~Lomonaco, T.~Tuytelaars, and G.~M. van~de Ven, ``Continual learning: Applications and the road forward,'' \emph{Trans. Mach. Learn. Res.}, vol. 2024, 2023.

\bibitem{OpenWorld1_2022}
Z.~Zhou, ``Open-environment machine learning,'' \emph{National Science Review}, vol.~9, 2022.

\bibitem{OpenWorld2_2023}
G.~Kim, C.~Xiao, T.~Konishi, Z.~Ke, and B.~Liu, ``Open-world continual learning: Unifying novelty detection and continual learning,'' \emph{ArXiv}, vol. abs/2304.10038, 2023.

\bibitem{OpenWorld3_2024}
F.~Zhu, S.~Ma, Z.~Cheng, X.-Y. Zhang, Z.~kui Zhang, and C.-L. Liu, ``Open-world machine learning: A review and new outlooks,'' \emph{ArXiv}, vol. abs/2403.01759, 2024.

\bibitem{incremental1_2020}
R.~Hadsell, D.~Rao, A.~A. Rusu, and R.~Pascanu, ``Embracing change: Continual learning in deep neural networks,'' \emph{Trends in Cognitive Sciences}, vol.~24, pp. 1028--1040, 2020.

\bibitem{incremental2_2022}
G.~M. van~de Ven, T.~Tuytelaars, and A.~S. Tolias, ``Three types of incremental learning,'' \emph{Nature Machine Intelligence}, vol.~4, pp. 1185 -- 1197, 2022.

\bibitem{incremental3_2023}
Y.~Wei, J.~Ye, Z.~Huang, J.~Zhang, and H.~Shan, ``Online prototype learning for online continual learning,'' \emph{2023 IEEE/CVF International Conference on Computer Vision (ICCV)}, pp. 18\,718--18\,728, 2023.

\bibitem{incremental4_2023}
Y.~Ghunaim, A.~Bibi, K.~Alhamoud, M.~Alfarra, H.~Hammoud, A.~Prabhu, P.~H.~S. Torr, and B.~Ghanem, ``Real-time evaluation in online continual learning: A new hope,'' \emph{2023 IEEE/CVF Conference on Computer Vision and Pattern Recognition (CVPR)}, pp. 11\,888--11\,897, 2023.

\bibitem{continual1_2016}
Z.~Chen and B.~Liu, ``Lifelong machine learning,'' \emph{Synthesis Lectures on Artificial Intelligence and Machine Learning}, 2016.

\bibitem{continual2_2024}
S.~Dohare, J.~F. Hernandez-Garcia, Q.~Lan, P.~Rahman, A.~R. Mahmood, and R.~S. Sutton, ``Loss of plasticity in deep continual learning,'' \emph{Nature}, vol. 632, pp. 768 -- 774, 2024.

\bibitem{continual3_2024_suvey}
B.~Wickramasinghe, G.~Saha, and K.~Roy, ``Continual learning: A review of techniques, challenges, and future directions,'' \emph{IEEE Transactions on Artificial Intelligence}, vol.~5, pp. 2526--2546, 2024.

\bibitem{human1_2021}
Q.~H. Pham, C.~Liu, and S.~C.~H. Hoi, ``Continual learning, fast and slow,'' \emph{IEEE Transactions on Pattern Analysis and Machine Intelligence}, vol.~46, pp. 134--149, 2021.

\bibitem{human2_2022}
D.~Kudithipudi, M.~Aguilar-Simon, J.~Babb, M.~Bazhenov, D.~Blackiston, J.~C. Bongard, A.~P. Brna, S.~C. Raja, N.~Cheney, J.~Clune, A.~R. Daram, S.~Fusi, P.~Helfer, L.~M. Kay, N.~A. Ketz, Z.~Kira, S.~Kolouri, J.~L. Krichmar, S.~Kriegman, M.~Levin, S.~Madireddy, S.~Manicka, A.~Marjaninejad, B.~L. McNaughton, R.~Miikkulainen, Z.~Navratilova, T.~Pandit, A.~Parker, P.~K. Pilly, S.~Risi, T.~J. Sejnowski, A.~Soltoggio, N.~Soures, A.~S. Tolias, D.~Urbina-Mel{\'e}ndez, F.~J. Valero-Cuevas, G.~M. van~de Ven, J.~T. Vogelstein, F.~Wang, R.~Weiss, A.~Yanguas-Gil, X.~Zou, and H.~T. Siegelmann, ``Biological underpinnings for lifelong learning machines,'' \emph{Nature Machine Intelligence}, vol.~4, pp. 196 -- 210, 2022.

\bibitem{human3_2023}
L.~Wang, X.~Zhang, Q.~Li, M.~Zhang, H.~Su, J.~Zhu, and Y.~Zhong, ``Incorporating neuro-inspired adaptability for continual learning in artificial intelligence,'' \emph{Nature Machine Intelligence}, vol.~5, pp. 1356 -- 1368, 2023.

\bibitem{Catastrophic1_1989}
M.~McCloskey and N.~J. Cohen, ``Catastrophic interference in connectionist networks: The sequential learning problem,'' \emph{Psychology of Learning and Motivation}, vol.~24, pp. 109--165, 1989.

\bibitem{Catastrophic2_1995}
J.~L. McClelland, B.~L. McNaughton, and R.~C. O’Reilly, ``Why there are complementary learning systems in the hippocampus and neocortex: insights from the successes and failures of connectionist models of learning and memory.'' \emph{Psychological review}, vol. 102 3, pp. 419--457, 1995.

\bibitem{Catastrophic3_2017}
S.-W. Lee, J.-H. Kim, J.~Jun, J.-W. Ha, and B.-T. Zhang, ``Overcoming catastrophic forgetting by incremental moment matching,'' \emph{ArXiv}, vol. abs/1703.08475, 2017.

\bibitem{survey1_2018}
G.~I. Parisi, R.~Kemker, J.~L. Part, C.~Kanan, and S.~Wermter, ``Continual lifelong learning with neural networks: A review,'' \emph{Neural networks : the official journal of the International Neural Network Society}, vol. 113, pp. 54--71, 2018.

\bibitem{survey3_2020}
M.~Masana, X.~Liu, B.~Twardowski, M.~Menta, A.~D. Bagdanov, and J.~van~de Weijer, ``Class-incremental learning: Survey and performance evaluation on image classification,'' \emph{IEEE Transactions on Pattern Analysis and Machine Intelligence}, vol.~45, pp. 5513--5533, 2020.

\bibitem{survey4_2023}
L.~Wang, X.~Zhang, H.~Su, and J.~Zhu, ``A comprehensive survey of continual learning: Theory, method and application,'' \emph{IEEE Transactions on Pattern Analysis and Machine Intelligence}, vol.~46, pp. 5362--5383, 2023.

\bibitem{Regularization1_2016}
J.~Kirkpatrick, R.~Pascanu, N.~C. Rabinowitz, J.~Veness, G.~Desjardins, A.~A. Rusu, K.~Milan, J.~Quan, T.~Ramalho, A.~Grabska-Barwinska, D.~Hassabis, C.~Clopath, D.~Kumaran, and R.~Hadsell, ``Overcoming catastrophic forgetting in neural networks,'' \emph{Proceedings of the National Academy of Sciences}, vol. 114, pp. 3521 -- 3526, 2016.

\bibitem{Regularization5_2018}
X.~Liu, M.~Masana, L.~Herranz, J.~van~de Weijer, A.~M. L{\'o}pez, and A.~D. Bagdanov, ``Rotate your networks: Better weight consolidation and less catastrophic forgetting,'' \emph{2018 24th International Conference on Pattern Recognition (ICPR)}, pp. 2262--2268, 2018.

\bibitem{Regularization9_2024}
W.~Chung, L.~Cherif, D.~Meger, and D.~Precup, ``Parseval regularization for continual reinforcement learning,'' \emph{ArXiv}, vol. abs/2412.07224, 2024.

\bibitem{Architecture1_2016}
A.~A. Rusu, N.~C. Rabinowitz, G.~Desjardins, H.~Soyer, J.~Kirkpatrick, K.~Kavukcuoglu, R.~Pascanu, and R.~Hadsell, ``Progressive neural networks,'' \emph{ArXiv}, vol. abs/1606.04671, 2016.

\bibitem{Architecture2_2018}
J.~Serr{\`a}, D.~Sur{\'i}s, M.~Miron, and A.~Karatzoglou, ``Overcoming catastrophic forgetting with hard attention to the task,'' in \emph{International Conference on Machine Learning}, 2018.

\bibitem{Architecture5_2023}
F.~Ye and A.~Bors, ``Self-evolved dynamic expansion model for task-free continual learning,'' \emph{2023 IEEE/CVF International Conference on Computer Vision (ICCV)}, pp. 22\,045--22\,055, 2023.

\bibitem{replay1_2016}
S.-A. Rebuffi, A.~Kolesnikov, G.~Sperl, and C.~H. Lampert, ``icarl: Incremental classifier and representation learning,'' \emph{2017 IEEE Conference on Computer Vision and Pattern Recognition (CVPR)}, pp. 5533--5542, 2016.

\bibitem{replay4_2018}
F.~M. Castro, M.~J. Mar{\'i}n-Jim{\'e}nez, N.~G. Mata, C.~Schmid, and A.~Karteek, ``End-to-end incremental learning,'' \emph{ArXiv}, vol. abs/1807.09536, 2018.

\bibitem{replay11_2024}
J.~S. Smith, L.~Valkov, S.~Halbe, V.~Gutta, R.~Feris, Z.~Kira, and L.~Karlinsky, ``Adaptive memory replay for continual learning,'' \emph{2024 IEEE/CVF Conference on Computer Vision and Pattern Recognition Workshops (CVPRW)}, pp. 3605--3615, 2024.

\bibitem{HowEfficient1_2023}
M.~Y. Harun, J.~Gallardo, T.~L. Hayes, and C.~Kanan, ``How efficient are today’s continual learning algorithms?'' \emph{2023 IEEE/CVF Conference on Computer Vision and Pattern Recognition Workshops (CVPRW)}, pp. 2431--2436, 2023.

\bibitem{Efficiency1_2023}
A.~Prabhu, H.~Hammoud, P.~K. Dokania, P.~H.~S. Torr, S.~N. Lim, B.~Ghanem, and A.~Bibi, ``Computationally budgeted continual learning: What does matter?'' \emph{2023 IEEE/CVF Conference on Computer Vision and Pattern Recognition (CVPR)}, pp. 3698--3707, 2023.

\bibitem{Efficiency2_2024}
J.~Yu, Y.~Zhuge, L.~Zhang, P.~Hu, D.~Wang, H.~Lu, and Y.~He, ``Boosting continual learning of vision-language models via mixture-of-experts adapters,'' \emph{2024 IEEE/CVF Conference on Computer Vision and Pattern Recognition (CVPR)}, pp. 23\,219--23\,230, 2024.

\bibitem{survey2_2019}
M.~D. Lange, R.~Aljundi, M.~Masana, S.~Parisot, X.~Jia, A.~Leonardis, G.~G. Slabaugh, and T.~Tuytelaars, ``A continual learning survey: Defying forgetting in classification tasks,'' \emph{IEEE Transactions on Pattern Analysis and Machine Intelligence}, vol.~44, pp. 3366--3385, 2019.

\bibitem{continual4_2024_survey}
T.~Wu, L.~Luo, Y.-F. Li, S.~Pan, T.-T. Vu, and G.~Haffari, ``Continual learning for large language models: A survey,'' \emph{ArXiv}, vol. abs/2402.01364, 2024.

\bibitem{survey5_2023}
D.-W. Zhou, Q.~Wang, Z.~Qi, H.-J. Ye, D.~chuan Zhan, and Z.~Liu, ``Class-incremental learning: A survey,'' \emph{IEEE Transactions on Pattern Analysis and Machine Intelligence}, vol.~46, pp. 9851--9873, 2023.

\bibitem{Regularization2_2017}
F.~Zenke, B.~Poole, and S.~Ganguli, ``Continual learning through synaptic intelligence,'' \emph{Proceedings of machine learning research}, vol.~70, pp. 3987--3995, 2017.

\bibitem{Regularization4_2018}
A.~Chaudhry, P.~K. Dokania, T.~Ajanthan, and P.~H.~S. Torr, ``Riemannian walk for incremental learning: Understanding forgetting and intransigence,'' \emph{ArXiv}, vol. abs/1801.10112, 2018.

\bibitem{Regularization6_2019}
H.~Ahn, D.~Lee, S.~Cha, and T.~Moon, ``Uncertainty-based continual learning with adaptive regularization,'' \emph{ArXiv}, vol. abs/1905.11614, 2019.

\bibitem{Regularization8_2023}
T.~G.~J. Rudner, F.~Bickford-Smith, Q.~Feng, Y.~W. Teh, and Y.~Gal, ``Continual learning via sequential function-space variational inference,'' \emph{ArXiv}, vol. abs/2312.17210, 2023.

\bibitem{replay2_2017}
D.~Lopez-Paz and M.~Ranzato, ``Gradient episodic memory for continual learning,'' in \emph{Neural Information Processing Systems}, 2017.

\bibitem{replay9_2020}
P.~Buzzega, M.~Boschini, A.~Porrello, D.~Abati, and S.~Calderara, ``Dark experience for general continual learning: a strong, simple baseline,'' \emph{ArXiv}, vol. abs/2004.07211, 2020.

\bibitem{replay12_2024}
J.~Liang, J.~Zhong, H.~Gu, Z.~Lu, X.~Tang, G.~Dai, S.~Huang, L.~Fan, and Q.~Yang, ``Diffusion-driven data replay: A novel approach to combat forgetting in federated class continual learning,'' in \emph{European Conference on Computer Vision}, 2024.

\bibitem{replay14_2025}
X.~Wang, S.-Y. Li, J.~Zhang, and S.~Chen, ``Cut out and replay: A simple yet versatile strategy for multi-label online continual learning,'' \emph{ArXiv}, vol. abs/2505.19680, 2025.

\bibitem{Architecture3_2020}
M.~Wortsman, V.~Ramanujan, R.~Liu, A.~Kembhavi, M.~Rastegari, J.~Yosinski, and A.~Farhadi, ``Supermasks in superposition,'' \emph{ArXiv}, vol. abs/2006.14769, 2020.

\bibitem{Architecture4_2022}
R.~Ardywibowo, Z.~Huo, Z.~Wang, B.~J. Mortazavi, S.~Huang, and X.~Qian, ``Varigrow: Variational architecture growing for task-agnostic continual learning based on bayesian novelty,'' in \emph{International Conference on Machine Learning}, 2022.

\bibitem{Architecture6_2024}
J.~Thapa and R.~Li, ``Bayesian adaptation of network depth and width for continual learning,'' in \emph{International Conference on Machine Learning}, 2024.

\bibitem{Architecture9_2025}
F.~Ye, A.~Bors, and K.~Zhang, ``Dynamic expansion diffusion learning for lifelong generative modelling,'' in \emph{AAAI Conference on Artificial Intelligence}, 2025.

\bibitem{pretain1_2021}
A.~Douillard, A.~Ram'e, G.~Couairon, and M.~Cord, ``Dytox: Transformers for continual learning with dynamic token expansion,'' \emph{2022 IEEE/CVF Conference on Computer Vision and Pattern Recognition (CVPR)}, pp. 9275--9285, 2021.

\bibitem{pretain2_2023}
Q.~Gao, C.~Zhao, Y.~Sun, T.~Xi, G.~Zhang, B.~Ghanem, and J.~Zhang, ``A unified continual learning framework with general parameter-efficient tuning,'' \emph{2023 IEEE/CVF International Conference on Computer Vision (ICCV)}, pp. 11\,449--11\,459, 2023.

\bibitem{pretain4_2024}
D.~Marczak, B.~Twardowski, T.~Trzci'nski, and S.~Cygert, ``Magmax: Leveraging model merging for seamless continual learning,'' \emph{ArXiv}, vol. abs/2407.06322, 2024.

\bibitem{theory1_2017}
C.~V. Nguyen, Y.~Li, T.~D. Bui, and R.~E. Turner, ``Variational continual learning,'' \emph{ArXiv}, vol. abs/1710.10628, 2017.

\bibitem{theory2_2022}
D.~B. Prado and P.~Riddle, ``A theory for knowledge transfer in continual learning,'' \emph{ArXiv}, vol. abs/2208.06931, 2022.

\bibitem{theory3_2023}
T.~G.~J. Rudner, F.~Bickford-Smith, Q.~Feng, Y.~W. Teh, and Y.~Gal, ``Continual learning via sequential function-space variational inference,'' \emph{ArXiv}, vol. abs/2312.17210, 2023.

\bibitem{theoryA_functionalregularisation}
P.~Pan, S.~Swaroop, A.~Immer, R.~Eschenhagen, R.~Turner, and M.~E. Khan, ``Continual deep learning by functional regularisation of memorable past,'' \emph{Advances in neural information processing systems}, vol.~33, pp. 4453--4464, 2020.

\bibitem{theoryB_teacherstudent}
S.~Lee, S.~Goldt, and A.~Saxe, ``Continual learning in the teacher-student setup: Impact of task similarity,'' in \emph{International Conference on Machine Learning}.\hskip 1em plus 0.5em minus 0.4em\relax PMLR, 2021, pp. 6109--6119.

\bibitem{theoryC_Howcatastrophic}
I.~Evron, E.~Moroshko, R.~Ward, N.~Srebro, and D.~Soudry, ``How catastrophic can catastrophic forgetting be in linear regression?'' in \emph{Conference on Learning Theory}.\hskip 1em plus 0.5em minus 0.4em\relax PMLR, 2022, pp. 4028--4079.

\bibitem{theoryD_theoreticalstudyOOD}
G.~Kim, C.~Xiao, T.~Konishi, Z.~Ke, and B.~Liu, ``A theoretical study on solving continual learning,'' \emph{Advances in neural information processing systems}, vol.~35, pp. 5065--5079, 2022.

\bibitem{theoryE_promptbased}
L.~Wang, J.~Xie, X.~Zhang, M.~Huang, H.~Su, and J.~Zhu, ``Hierarchical decomposition of prompt-based continual learning: Rethinking obscured sub-optimality,'' \emph{Advances in Neural Information Processing Systems}, vol.~36, pp. 69\,054--69\,076, 2023.

\bibitem{theoryF_Learnability}
G.~Kim, C.~Xiao, T.~Konishi, and B.~Liu, ``Learnability and algorithm for continual learning,'' in \emph{International conference on machine learning}.\hskip 1em plus 0.5em minus 0.4em\relax PMLR, 2023, pp. 16\,877--16\,896.

\bibitem{theoryG_forgettingandgeneralization}
S.~Lin, P.~Ju, Y.~Liang, and N.~Shroff, ``Theory on forgetting and generalization of continual learning,'' in \emph{International Conference on Machine Learning}.\hskip 1em plus 0.5em minus 0.4em\relax PMLR, 2023, pp. 21\,078--21\,100.

\bibitem{qianyi}
P.~Ju, S.~Lin, M.~S. Squillante, Y.~Liang, and N.~Shroff, ``Theoretical analysis on the generalization power of overfitted transfer learning.''

\bibitem{theoryH_plasticity}
S.~Dohare, J.~F. Hernandez-Garcia, Q.~Lan, P.~Rahman, A.~R. Mahmood, and R.~S. Sutton, ``Loss of plasticity in deep continual learning,'' \emph{Nature}, vol. 632, no. 8026, pp. 768--774, 2024.

\bibitem{theoryI_Measuringforgetting}
J.~Kim, Y.~Kim, and J.-y. Sohn, ``Measuring representational shifts in continual learning: A linear transformation perspective,'' in \emph{Forty-second International Conference on Machine Learning}, 2025.

\bibitem{theory4_2024}
X.~Zhao, H.~Wang, W.~Huang, and W.~Lin, ``A statistical theory of regularization-based continual learning,'' in \emph{International Conference on Machine Learning}.\hskip 1em plus 0.5em minus 0.4em\relax PMLR, 2024, pp. 61\,021--61\,039.

\bibitem{fewparameter1_2023}
H.~Zhao, T.~Zhou, G.~Long, J.~Jiang, and C.~Zhang, ``Does continual learning equally forget all parameters?'' in \emph{International Conference on Machine Learning}, 2023.

\bibitem{shujuji}
J.~Deng, Q.~Wu, P.~Ju, S.~Lin, Y.~Liang, and N.~Shroff, ``Unlocking the power of rehearsal in continual learning: A theoretical perspective,'' \emph{arXiv preprint arXiv:2506.00205}, 2025.

\end{thebibliography}

\end{document}